\newcommand{\fs}{f}
\newcommand{\fh}{\hat{f}}
\newcommand{\fm}{f_{\bf d}}
\newcommand{\fn}{f}
\newcommand{\gs}{g}
\newcommand{\gh}{\hat{g}}
\newcommand{\gm}{g_{\bf d}}
\newcommand{\gn}{g}
\newcommand{\hs}{h}
\newcommand{\hh}{\hat{h}}
\newcommand{\hmm}{h_{\bf d}}
\newcommand{\hn}{h}
\newcommand{\js}{j}
\newcommand{\jh}{\hat{j}}
\newcommand{\jm}{j_{\bf d}}
\newcommand{\jn}{j}
\newcommand{\xs}{x}
\newcommand{\us}{u}
\newcommand{\ys}{y}
\newcommand{\zeros}{0}
\newcommand{\Qs}{Q}
\newcommand{\Rs}{R}
\newcommand{\Ss}{S}
\newcommand{\Ps}{P}
\newcommand{\PsC}{P_{\nabla V }^\mathtt{C}}
\newcommand{\id}{I}
\newcommand{\ls}{\ell}
\newcommand{\Ws}{W}
\newcommand{\Xs}{X}
\newcommand{\Ys}{Y}
\newcommand{\As}{A}
\newcommand{\Bs}{B}
\newcommand{\Cs}{C}
\newtheorem{theorem}{Theorem}
\newtheorem{problem}[theorem]{Problem}
\newtheorem{lemma}[theorem]{Lemma}
\newtheorem{definition}[theorem]{Definition}
\newtheorem{proposition}[theorem]{Proposition}
\newtheorem{corollary}[theorem]{Corollary}
\newcommand{\R}{\mathbb{R}}
\newcommand{\Rnn}{\mathbb{R}_{\geq 0}}
\newcommand{\T}{\mathrm{T}}
\def\inner<#1>{\langle #1 \rangle}
\title{Learning Deep Dissipative Dynamics}
\author {
     Yuji Okamoto\textsuperscript{\rm 1} \equalcontrib,
    Ryosuke Kojima\textsuperscript{\rm 1, 2} \equalcontrib
}
\begin{document}

\maketitle

\begin{abstract}
This study challenges strictly guaranteeing ``dissipativity'' of a dynamical system represented by neural networks learned from given time-series data.
Dissipativity is a crucial indicator for dynamical systems that generalizes stability and input-output stability, known to be valid across various systems including robotics, biological systems, and molecular dynamics.
By analytically proving the general solution to the nonlinear Kalman–Yakubovich–Popov (KYP) lemma, which is the necessary and sufficient condition for dissipativity, we propose a differentiable projection that transforms any dynamics represented by neural networks into dissipative ones and a learning method for the transformed dynamics.
Utilizing the generality of dissipativity, our method strictly guarantee  stability, input-output stability, and energy conservation of trained dynamical systems.
Finally, we demonstrate the robustness of our method against out-of-domain input through applications to robotic arms and fluid dynamics.

Code : \url{https://github.com/kojima-r/DeepDissipativeModel}
\end{abstract}
\section{Introduction}

Dissipativity extends the concept of Lyapunov stability to input-output dynamical systems by considering ``energy'' \cite{brogliato2020dissipative}.
In input-output systems, the relationship between the externally supplied energy and the dissipated energy plays an important role.
The theory of dissipativity has wide applications, including electrical circuits \cite{alma9926554206104034}, mechanical systems \cite{hatanaka2015passivity}, and biological systems \cite{goldbeter2018dissipative}.
Considering the inflow, outflow, and storage of energy in a system provides crucial insights for applications such as stability analysis, controller design, and complex interconnected systems.

\begin{figure}[t]
    \centering
     \includegraphics[width=0.9\linewidth]{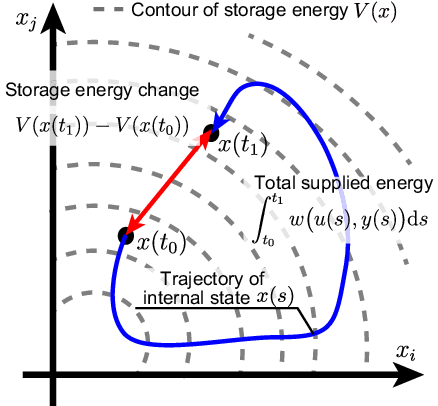}\\
     \caption{Sketch of the dissipativity:
     The red difference in storage energy is less than the total energy supplied along the blue line, which represents the trajectory of the internal state $x(s)$.
     }
    \label{fig:dissipasivity}
\end{figure}

Theoretically, dissipativity in input-output dynamical systems is defined by the time evolution of inputs $u(t)$, outputs $y(t)$, and internal states $x(t)$.
The input-output system is dissipative if the following inequality is satisfies:
\begin{align}
\underbrace{V(\xs(t_1)) - V(\xs(t_0))}_{\text{ Storage Energy Changes}} \leq
\underbrace{\int_{t_0}^{t_1}w\big(\us(s),\ys(s)\big)\mathrm{d}s}_{\text{Total Energy Supplies}}, \label{Eq:dissipativity}
\end{align}
where $V(\xs(t))$ is called the storage energy and  $w(\us(t), \ys(t))$ is called the supply rate.
The left side represents the change in storage energy from the initial state $\xs(t_0)$ to the final state $\xs(t_1)$, while the right side signifies the total supplied energy from $t_0$ to $t_1$ (See Figure~\ref{fig:dissipasivity}).
In Newtonian mechanics, if $V(x)$ is defied as the mechanical energy and $w(u, y)$ is defined as the product of external force \( u \) and velocity \( y \), i.e., \( w(u, y) = uy \), this case corresponds to the principle of energy conservation \cite{Stramigioli2006Geometoric}.
In this context, the integral of the supply rate represents the ``work'' done by the external force $u$.

In this study, we propose an innovative method for learning dynamical systems described by neural networks from time-series data when the system is known a priori to possess dissipativity.
Considering the entire space of the dynamical system described by neural networks, we introduce a transformation of the system by a projection map onto the subspace satisfying dissipativity.
We emphasize that this projection can be applied to the dynamical systems consisting of any differentiable neural networks.
By incorporating this projection into the gradient-based optimization of neural networks, our method allows fitting dissipative dynamics to time-series data.

By configuring the supply rate $w(\cdot,\cdot)$ in the dissipativity, users can design models integrating well-established prior knowledge such as properties of dynamical systems or information from physical systems.
According to the properties of the target dynamical system, such as internal stability, input-output stability, and energy conservation, the supply rate $w(\cdot,\cdot)$ can be constrained.
Within physical systems, the supply rate $w(\cdot,\cdot)$ can be derived from the principle of energy conservation.

Real-world environments often present inputs that vary from the input-output datasets used during model training, for example due to dataset shifts.
Our proposed method guarantees that the trained model strictly satisfies dissipativity for any input time-series data, thereby maintaining robust performance on out-of-domain input.
In this study, we verified the effectiveness of our method, particularly its robustness to out-of-domain input, using both linear and nonlinear systems, including an $n$-link pendulum (a principal model of robotic arms) and the behavior of viscous fluid around a cylinder.

The contributions of this study are as follows:
\begin{enumerate}[(i)]
\item We analytically derived a general solution to the nonlinear KYP lemma and a differentiable projection from the dynamical systems represented by neural networks to a subspace of dissipative ones.
\item We proposed a learning method for strictly dissipative dynamical systems using the above projection.
\item We showed that our learning method generalizes existing methods that guarantee internal stability and input-output stability.
\item We confirmed the effectiveness of our method with three experiments with benchmark data.
\end{enumerate}

\section{Related Work}

\subsubsection{Learning Stable Dynamics.}
In recent years, numerous methods have been proposed for learning models with a priori properties, such as system stability, rather than relying solely on data \cite{blocher2017learning,khansari2011learning,umlauft17a}.
With the advent of deep learning, techniques have been developed to enhance the stability of loss functions compatible with gradient-based learning \cite{richards2018lyapunov}. 

Manek et al. tackled the same internal system but introduced a novel method that guarantees the stability without depending on loss optimization by analytically guaranteeing internal stability \cite{Manek2019}.
This approach was further extended to apply positive invariant sets, such as limit cycles and line attractors, to ensure internal stability \cite{takeishi2021learning}.
Additionally, this analytical approach has been developed for closed-loop systems, ensuring their stability through an SMT solver \cite{Chang2019dynamics}.

Lawrence et al. utilized stochastic dynamical systems, emphasizing internal stability and maintaining it through a loss-based approach \cite{Lawrence2020}.
Similarly, another method has been proposed for state-space models, focusing on input-output stability and ensuring this through projection \cite{kojima2022learning}.

Unlike the above approaches, techniques that imposes constraints on the architecture of neural networks to guarantee energy dissipativity has also been proposed \cite{xu2023learning, sosanya2022dissipative}.

\subsubsection{Hamiltonianian NN.}
Related to the learning of stable systems, Hamiltonian Neural Networks (HNNs) incorporate the principle of energy conservation into their models \cite{greydanus2019hamiltonian}.
Hamiltonian dynamical systems maintain conserved energy, allowing HNNs to learn Hamiltonian functions to predict time evolution.
This method ensures that the model adheres to the conservation of energy law, resulting in physically accurate predictions.

Conversely, some systems exhibit decreasing energy over time without external input, a characteristic known as ``dissipation.''
This property is prevalent in many real-world systems, particularly those involving thermodynamics and friction.
Consequently, methods for learning systems with dissipation from data are gaining interest \cite{drgovna2022dissipative}.

By generalizing dissipation from energy to a broader positive definite function $V$, it can represent a unified concept encompassing input-output stability and Lyapunov stability. 
In this study, we adopted this broader interpretation of dissipativity, allowing us to understand the learning of systems that ensure stability-related properties in a unified framework.
Hereafter, in this paper, we will use the term ``dissipativity'' without distinguishing between ``dissipation'' and ``dissipativity.''

\subsubsection{Neural ODE.}
The state-space dynamic system can be regarded as a differential equation, and our implementation actually uses neural ODE as an internal solver \cite{chen2018neural,chen2019review}.
These techniques have been improved in recent years, including discretization errors and computational complexity.
Although we used an Euler method for simplicity, we can expect that learning efficiency would be further improved by using these techniques.
In this field, methods have been proposed that mainly learn various types of continuous-time dynamics from data.
For example, extended methods for learning stochastic dynamics have been proposed\cite{kidger2020neural,morrill2021neural}

\section{Background}
This study deals with continuous-time state-space models as input-output systems using a nonlinear Lipschitz continuous mapping $\fs(\xs)\in\R^n$ with $f(0) = 0$, continuous mappings $\gs(\xs) \in \R^{n\times m},\hs(\xs)\in \R^l$ with $h(0) = 0$ and $\js(\xs)\in \R^{l\times m}$ formed by neural networks:
\begin{align}
\begin{aligned}
\dot{\xs} &= \fs(\xs) + \gs(\xs)\us , \quad \xs(0) = \xs_0\\
\ys &= \hs(\xs) + \js(\xs)\us
\end{aligned}\label{Eq:main_system}
\end{align}
where the internal state $\xs$, the input $\us$, and the output $\ys$ belong to a signal spaces that maps from time interval $[0,\infty)$ to the $n$, $m$, and $l$ dimensional Euclidean space, respectively.
Here, a tuple $(\fs,\gs,\hs,\js)$ is called a dynamics of the input-output system~(\ref{Eq:main_system}). 
Dissipativity is defined by the supply of energy through the input-output signals $\us,\ys$ and the change in storage energy depending on the internal state $\xs$.
 
\begin{definition}[Dissipativity]
Considering a supply rate $w : \R^m \times  \R^l \rightarrow \R$, there exist a differentiable positive semi-definite  storage function~$V : \R^n \rightarrow \R_{\geq 0}$ such that the input-output system~(\ref{Eq:main_system}) satisfies the dissipative condition (\ref{Eq:dissipativity}), then the system is dissipative.
\end{definition}

Due to the flexible definition of the supply rate~$w(\us, \ys)$, dissipativity can be precisely designed to match the energy conservation law of physical systems, as well as adapt to the properties of dynamical systems, such as internal stability.

For example in Newtonian mechanics, the sum of kinetic and potential energy can be regarded as the storage function $V(x)$.
The supply rate $w(u,y)$ can be determined by the difference between the ``work'' done by external forces and energy dissipativity caused by air resistance or friction.
This work is represented as the integral of the product of velocity and external force.
This energy dissipativity due to friction can be expressed as a quadratic form of velocity.
This supply rate belongs to a quadratic form of external input and observed velocity (see Appendix~\ref{APP:Examples}).

Additionally, dissipativity is defined as an extension of internal stability and input-output stability.
For $w(\us, \ys) = 0$, it corresponds to internal stability, for $w(\us, \ys) = \gamma^2 \|\us\|^2 - \|\ys\|^2$, it corresponds to input-output stability ($\gamma>0$ is the gain of input-output signals).

In general, the supply rate~$w(\us, \ys)$, as described in the above two paragraphs, is represented as a quadratic form of the input and output:
\begin{align}
w(\us,\ys) \triangleq
[\ys^\T,\us^\T] 
\begin{bmatrix}
\Qs&\Ss\\
\Ss^\T&\Rs
\end{bmatrix}
\begin{bmatrix}
\ys\\
\us
\end{bmatrix}\label{Eq:supply_function}
\end{align}
The supply rate parameters $\Qs, \Ss, \Rs$ can be designed in a manner that corresponds to the energy conservation laws of physical systems and the properties of dynamical systems such as internal stability.
When the supply rate can be expressed in this form, there exists a necessary and sufficient condition of dissipative dynamical systems, formulated as the following matrix equation:

\begin{proposition}[Nonlinear KYP lemma {\cite[Theorem 4.101]{brogliato2020dissipative}}]
Consider the input-output system~(\ref{Eq:main_system}) is reachable.
The system~(\ref{Eq:main_system}) is dissipative if and only if there exists $\ls:\R^n \rightarrow  \R^{q}$, $\Ws:\R^n \rightarrow  \R^{q\times m}$ and a differentiable positive semi-definite function $V:\R^n \rightarrow \R_{\geq 0}$ such that 
\begin{align}
     \nabla V^\T(\xs)\fs(\xs) &= \hs^\T(\xs) \Qs \hs(\xs) - \ls^\T (\xs) \ls (\xs),\nonumber\\
    \frac{1}{2} \nabla V^\T(\xs) \gs(x) &=  \hs^\T(\xs)(\Ss + \Qs \js(\xs)) - \ls^\T(\xs) \Ws(\xs),\nonumber\\
    \Ws^\T (\xs) \Ws(\xs) &= \Rs +  \js^\T(\xs) \Ss + \Ss^\T \js(\xs) + \js^\T (\xs) \Qs \js(\xs),\nonumber\\
    &\quad \forall \xs\in \R^n,\label{Eq:KYP_condition}
\end{align}
where the nonlinear dynamical system is reachable if and only if for any $x^*$ there exists $T\geq 0$ and $u$ such that $x(0) = 0$ and $x(T)=  x^*$.
\end{proposition}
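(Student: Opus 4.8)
The plan is to pass from the integral dissipation inequality~(\ref{Eq:dissipativity}) to its infinitesimal form and then read off~(\ref{Eq:KYP_condition}) as the coefficient-matching identities of a quadratic sum-of-squares factorization. Since $V$ is assumed differentiable, I would first show that~(\ref{Eq:dissipativity}) holding along every admissible trajectory is equivalent to the pointwise \emph{differential dissipation inequality}
\begin{align}
\nabla V^\T(\xs)\big(\fs(\xs)+\gs(\xs)\us\big)\ \leq\ w\big(\us,\,\hs(\xs)+\js(\xs)\us\big),\quad \forall\,\xs\in\R^n,\ \forall\,\us\in\R^m. \nonumber
\end{align}
The direction ``differential $\Rightarrow$ integral'' is immediate: write $\tfrac{\mathrm{d}}{\mathrm{d}s}V(\xs(s))=\nabla V^\T(\xs)\dot{\xs}$ and integrate over $[t_0,t_1]$. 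The converse divides~(\ref{Eq:dissipativity}) by $t_1-t_0$ and lets $t_1\downarrow t_0$; here controllability is what forces the inequality to hold for \emph{all} states and \emph{all} instantaneous inputs, since every pair $(\xs,\us)$ is realized as a (state, applied-input) pair of some admissible trajectory.

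I would then substitute $\ys=\hs(\xs)+\js(\xs)\us$ into the quadratic supply rate~(\ref{Eq:supply_function}) and collect powers of $\us$, writing the dissipation residual as a quadratic form in $\us$ with state-dependent coefficients,
\begin{align}
\Delta(\xs,\us)\triangleq w\big(\us,\hs+\js\us\big)-\nabla V^\T(\fs+\gs\us)=\us^\T M(\xs)\us+2\,b(\xs)^\T\us+c(\xs), \nonumber
\end{align}
where $M=\Rs+\js^\T\Ss+\Ss^\T\js+\js^\T\Qs\js$, $\ b^\T=\hs^\T(\Ss+\Qs\js)-\tfrac12\nabla V^\T\gs$, and $c=\hs^\T\Qs\hs-\nabla V^\T\fs$ (taking $\Qs,\Rs$ symmetric). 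The differential inequality is now exactly the statement that $\Delta(\xs,\cdot)$ is a nonnegative quadratic in $\us$ for each fixed $\xs$.

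The core step is the factorization. For fixed $\xs$, $\Delta(\xs,\us)\geq 0$ for all $\us$ iff the bordered matrix
\begin{align}
N(\xs)\triangleq\begin{bmatrix} c(\xs) & b(\xs)^\T \\ b(\xs) & M(\xs)\end{bmatrix}\succeq 0, \nonumber
\end{align}
which by a Schur-complement argument is equivalent to $M\succeq 0$, $b\in\operatorname{range}(M)$, and $c-b^\T M^{+}b\geq 0$. A positive semidefinite $N(\xs)$ factors as $N(\xs)=\big[\ls(\xs)\ \Ws(\xs)\big]^\T\big[\ls(\xs)\ \Ws(\xs)\big]$ with $\ls(\xs)\in\R^{q}$, $\Ws(\xs)\in\R^{q\times m}$ and $q=\operatorname{rank}\,N(\xs)\leq m+1$. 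Reading off its three blocks yields precisely $\ls^\T\ls=c$, $\ls^\T\Ws=b^\T$, and $\Ws^\T\Ws=M$, i.e. the first, second, and third lines of~(\ref{Eq:KYP_condition}). This settles sufficiency immediately, since any $\ls,\Ws,V$ obeying~(\ref{Eq:KYP_condition}) give back $\Delta=\|\ls+\Ws\us\|^2\geq 0$, hence the differential and then the integral inequality.

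The hard part is the necessity direction, which needs two ingredients beyond the pointwise linear algebra. First, the factorization must be carried out with a fixed $q$ and sufficient regularity in $\xs$: although $N(\xs)\succeq 0$ factors at each point, obtaining globally defined $\ls(\xs),\Ws(\xs)$ requires a consistent choice of matrix square root across $\xs$. Second, the definition of dissipativity only asserts \emph{some} nonnegative storage function, whereas the statement needs a differentiable $V$. The natural candidate is the available storage $V_a(\xs)=\sup_{\us(\cdot),\,T\geq 0}\big(-\int_0^{T} w(\us,\ys)\,\mathrm{d}s\big)$ over trajectories starting at $\xs$, and controllability is exactly what guarantees $V_a(\xs)<\infty$, making it a bona fide storage function that satisfies the dissipation relation with equality. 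Establishing enough smoothness to differentiate $V_a$ and then feed $\nabla V_a$ into the factorization above is the real technical obstacle; everything else reduces to the routine expansion and block-matching carried out above.
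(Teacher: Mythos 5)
Your proposal is correct and follows essentially the same route as the paper's own proof: pass to the pointwise inequality $w(u,y)-\dot V\geq 0$, observe that the residual is a nonnegative quadratic in $u$, factor it as $\|\ls+\Ws u\|^2$, and match coefficients in powers of $u$ to obtain the three identities of~(\ref{Eq:KYP_condition}); your explicit Schur-complement/Gram-matrix bookkeeping and your flagging of the genuine technical gaps (a consistent, state-regular choice of the factorization and the differentiability of the available storage) are refinements the paper silently elides by citing the textbook. Note also that your signs are the correct ones --- the paper's displayed identity $\dot V = w + \|\ls+\Ws u\|^2$ should read $\dot V = w - \|\ls+\Ws u\|^2$, exactly as your computation of $\Delta$ implies.
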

\begin{proof}
Appendix~\ref{APP:Proof_of_KYP_lemma}
\end{proof}

The maps $\ls$ and $\Ws$ represent the residuals of the time derivative of the storage function $V$ and the supply rate $w(u,y)$ in the definition of dissipativity.  
The maps $\ls$ corresponds to terms independent of the input $u$, while $\Ws$ corresponds to terms linearly dependent on $u$ (See detail in Appendix~\ref{APP:Proof_of_KYP_lemma}).

Note that the maps $\ls, \Ws$ and $V$ satisfying the condition~(\ref{Eq:KYP_condition}) is not unique.
See Appendix~\ref{APP:Freedom_degree_of_KYP_lemma} for a more detailed discussion about the freedom degree of the dissipativity condition. 

Assuming $\jn \equiv 0$, the conditions for input-output stability can be easily derived from the nonlinear KYP lemma and is known as the Hamilton-Jacobi inequality (Details are provided in the Appendix~\ref{APP:Special_case_KYP_lemma}).
Various dynamical systems except for the field of electronic circuits often lack a direct path $j$ from input to output.

The nonlinear KYP lemma means the existence of a (non-unique) mapping from dissipative dynamics $(f,g,h,j)$ to conditions-satisfying maps $\ls(\xs)$, $\Ws(\xs)$, and  $V(\xs)$.
On the contrary, it has not been demonstrated whether there is a mapping from $(\ls, \Ws, V)$ to the dynamics $(\fs,\gs,\hs,\js)$ satisfying dissipativity.
If it is possible to derive a mapping from $(\ls, \Ws, V)$ to dissipative dynamics $(\fs,\gs,\hs,\js)$, then by managing $(\ls, \Ws, V)$, indirect constraining dissipative dynamics $(\fs,\gs,\hs,\js)$ could become possible.

\section{Method}
\subsection{Projection-based Optimization Framework}
\begin{figure}[t]
    \centering
     \includegraphics[width=1.0\linewidth]{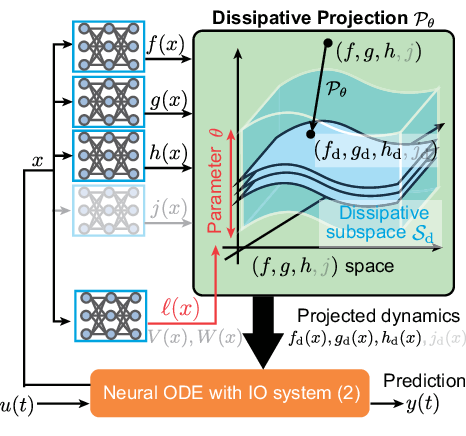}
    \caption{ Sketch of the proposed method: The dynamics of the input-output system $(\fn,\gn,\hn,\jn)$ is projected into a space with guaranteed dissipativity using dissipative projection $\mathcal{P}_\theta$, and the output signal $y(t)$ is predicted using the projected dynamics  $(\fm,\gm,\hmm,\jm)$ and the input signal $u(t)$.}
    \label{fig:parametric_projection}
\end{figure}

The aim of this study is to learn strictly dissipative dynamics fitted to input-output data. 
We consider the subspace of all dissipative dynamics within the function space consisting of tuples of four nonlinear maps $(\fs, \gs, \hs, \js)$ constructed by neural networks.
By projecting the dynamics $(\fs, \gs, \hs, \js)$ onto the subspace of dissipative dynamics, the resulting neural network-based dynamical system will inherently satisfy dissipativity. 
Consequently, by training the projected neural networks to fit the input-output data, both fitting accuracy and strict dissipativity are achieved.
Considering a parametric subspace of dissipative dynamics, we introduce the parameterized projection onto this subspace.

\begin{definition}[Dissipative projection]\label{Def:dissipative_projection}
Let $\mathbb{S}$ be a function space of $(\fs,\gs,\hs,\js)$, $\mathbb{S}_{\rm d} \subset \mathbb{S}$ be the subspace satisfying dissipativity, and $\Theta$ be a parameter set.
If the differentiable functional $\mathcal{P}_{\theta}: \mathbb{S}\rightarrow \mathbb{S}_{\theta} \subset \mathbb{S}_d$ satisfies
\begin{align}
\mathcal{P}_{\theta} \circ \mathcal{P}_{\theta}  = \mathcal{P}_{\theta },
\quad \mathbb{S}_{\rm d} = \bigcup_{\theta \in \Theta} \mathbb{S}_\theta,
\end{align}
then, $\mathcal{P}_{\theta}$ is called a dissipative projection.
\end{definition}

The nonlinear KYP lemma serves as a necessary and sufficient condition for dissipativity, meaning that $\mathbb{S}_{\rm d}$ aligns with the entirety of dynamics satisfying this condition.
By fixing the maps $(\ls,\Ws,V)$, we determine a subspace of dynamics $(f,g,h,j)$ that complies with the equation~(\ref{Eq:KYP_condition}) of the nonlinear KYP lemma.
By unfixing the maps $(\ls,\Ws,V)$, the union of all of such subspaces related to  $(\ls,\Ws,V)$ corresponds to the entire set of dissipative dynamics.
So, the maps $(\ls,\Ws,V)$ can be regarded as the parameter $\theta$ on Definition~\ref{Def:dissipative_projection}
(See Figure~\ref{fig:parametric_projection}).
 
By jointly learning the pre-projected dynamics $(\fs, \gs, \hs, \js)$ and the indirect parameter $\theta$, it becomes possible to optimize dissipative dynamics.
The formulation for learning strictly dissipative dynamics is established using the dissipative projection $\mathcal{P}_\theta$ as follows:

\begin{problem}
Let $D \triangleq \{\us_i, \ys^*_i\}_{i = 1}^{N}$ be a dataset and $\mathcal{P}_\theta$ be a dissipative projection.
Our problem is written as
\begin{align}
\begin{aligned}
    &\underset{(\fn,\gn,\hn,\jn)\in \mathbb{S},\theta \in \Theta}{\text{\bf minimize}}~\mathrm{E}_{(\us,\ys^*) \in D} [\|\ys^* - \ys\|^2] \\
\end{aligned} \label{Eq:optimization_projection_based}
\end{align}
where $\ys$ is the prediction result by the input signal $\us$ and the input-output system~(\ref{Eq:main_system}) from projected dynamics
\begin{align*}
(\fm,\gm,\hmm,\jm) \triangleq \mathcal{P}_\theta \big(\fn,\gn,\hn,\jn).
\end{align*}
\end{problem}

In the following sections, we analytically derive concrete dissipative projections $\mathcal{P}_\theta$ based on the nonlinear KYP lemma.
To derive the dissipative projection with parameters $(\ls,\Ws,V)$, we generally solve the equation~(\ref{Eq:KYP_condition}) in the nonlinear KYP lemma.
Therefore, in the next section, we derive the general solution to the matrix equation of the nonlinear KYP lemma, and in Section~\ref{Sec:Dissipative_Projection}, we derive a dissipative projection using the parameters \((\ls, \Ws, V)\) based on this general solution.
Finally, we introduce a loss function to realize efficient learning for dissipative input-output system~(\ref{Eq:main_system}).

\subsection{General Solution of Nonlinear KYP Lemma}\label{Sec:General_solution_of_Nonlinear_KYP_lemma}
For any maps $(\ls, \Ws, V)$, equations~(\ref{Eq:KYP_condition}) in the nonlinear KYP lemma are written as a quadratic matrix equation (QME) form of the dynamics~$(\fs,\gs,\hs,\js)$:
\begin{align}
    \Xs^\T \As \Xs + \Bs^\T \Xs + \Xs^\T \Bs  + \Cs = \zeros \label{Eq:QME}
\end{align}
where
\begin{align}
\begin{aligned}
    \Xs &\triangleq
\begin{bmatrix}
\fs & \gs\\
\hs & \js
\end{bmatrix},\hspace{1ex}
\As \triangleq
\begin{bmatrix}
\zeros & \zeros\\
\zeros & -\Qs
\end{bmatrix},\\
\Bs &\triangleq
\begin{bmatrix}
\frac{1}{2}\nabla V&  \zeros\\
\zeros &  -\Ss
\end{bmatrix},\hspace{1ex}
\Cs \triangleq  
\begin{bmatrix}
\|\ls\|^2 & \ls^\T\Ws\\
\Ws^\T \ls & \Ws^\T \Ws - \Rs
\end{bmatrix}.
\end{aligned}
\label{Eq:QME_parameters}
\end{align}
The general solution of this QME presents the following.
\begin{lemma}\label{Lem:solution_of_QME}
Assuming $\Qs$ is a negative definite matrix, if $\Rs- \Ss^\T \Qs^{-1}\Ss -\Ws^\T\Ws$ is a positive semi-definite matrix, then the QME~(\ref{Eq:QME}) exists a solution, and the general solution is written as
\begin{subequations}
\begin{align}
&\fs=  \PsC \fh + \frac{\nabla V}{\|\nabla V\|^2} \Big( \hh^\T \Qs \hh - \|\ls\|^2\Big)\label{Eq:QME_solution_f}\\
&\gs = \PsC \gh + 2\frac{\nabla V}{\|\nabla V\|^2} \Big(\hh^\T ( \Ss + \Qs \jh) - \ls^\T \Ws \Big)\label{Eq:QME_solution_g}\\
&\hs=\hh,\quad \js = \jh \label{Eq:QME_solution_hj}
\end{align}\label{Eq:QME_solution}
\end{subequations}
where $\PsC$ is the projection onto the complementary of  subspace spanned by the vector $\nabla V$ which defined as
\begin{align*}
     \PsC \triangleq \id_n  - \frac{1}{\|\nabla V\|^2} \nabla V \nabla V^\T.
\end{align*}
The intermediate variables $(\fh,\gh,\hh,\jh)$ are parameters in the solution of this QME~(\ref{Eq:QME}), such that $\jh$ satisfies the following ellipsoidal condition:
\begin{align}
\begin{aligned}
  &(\jh + \Qs^{-1}\Ss)^\T(-\Qs)(\jh + \Qs^{-1}\Ss)  \\
  &\hspace{10ex} =  \Rs - \Ss^\T \Qs^{-1}\Ss - \Ws^\T \Ws. 
\end{aligned}\label{Eq:ellipsoidal_equation}    
\end{align}
\end{lemma}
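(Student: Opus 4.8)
The plan is to expand the quadratic matrix equation~(\ref{Eq:QME}) block by block, read off three coupled equations that coincide with the three lines of~(\ref{Eq:KYP_condition}), and then solve them in the order $(\hs,\js)$ first, $(\fs,\gs)$ afterwards. Substituting the block forms~(\ref{Eq:QME_parameters}) and carrying out the products $\Xs^\T\As\Xs$, $\Bs^\T\Xs+\Xs^\T\Bs$, and $\Cs$, the $(1,1)$, $(1,2)$, and $(2,2)$ blocks of~(\ref{Eq:QME}) reproduce exactly the scalar, vector, and matrix equations of the nonlinear KYP lemma. The key structural observation is that the $(2,2)$ block involves only $\js$, that $\hs$ never appears as an unknown in the other two blocks, and that the $(1,1)$ and $(1,2)$ blocks become \emph{linear} in $\fs$ and $\gs$ once $\hs,\js$ are fixed. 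Accordingly I set $\hs=\hh$ as an entirely free parameter and $\js=\jh$ as a parameter constrained only by the $(2,2)$ block, obtaining~(\ref{Eq:QME_solution_hj}), and then solve the remaining two blocks for $\fs$ and $\gs$.

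First I would treat the $(2,2)$ block $\Ws^\T\Ws = \Rs + \jh^\T\Ss + \Ss^\T\jh + \jh^\T\Qs\jh$. Since $\Qs$ is negative definite, $\Qs^{-1}$ exists, so I complete the square to rewrite it as $(\jh+\Qs^{-1}\Ss)^\T(-\Qs)(\jh+\Qs^{-1}\Ss) = \Rs - \Ss^\T\Qs^{-1}\Ss - \Ws^\T\Ws$, which is precisely the ellipsoidal equation~(\ref{Eq:ellipsoidal_equation}). The left-hand side is a Gram form relative to the positive definite matrix $-\Qs$, hence always positive semi-definite; conversely any positive semi-definite right-hand side is realizable in this form. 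This is exactly where the hypothesis enters: positive semi-definiteness of $\Rs-\Ss^\T\Qs^{-1}\Ss-\Ws^\T\Ws$ is both necessary and sufficient for a real $\jh$ to exist. I would prove sufficiency constructively by taking a Cholesky factor $-\Qs = L^\T L$ and a Gram factor $N \triangleq \Rs-\Ss^\T\Qs^{-1}\Ss-\Ws^\T\Ws = P^\T P$, solving $L(\jh+\Qs^{-1}\Ss)=P$, and inverting $L$ to recover $\jh=L^{-1}P - \Qs^{-1}\Ss$ (with $P$ embedded into the correct row dimension).

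With $\hs=\hh$ and $\js=\jh$ fixed, the $(1,1)$ block $\nabla V^\T\fs = \hh^\T\Qs\hh - \|\ls\|^2$ is a single scalar linear constraint on the vector $\fs\in\R^n$, and the $(1,2)$ block $\tfrac12\nabla V^\T\gs = \hh^\T(\Ss+\Qs\jh)-\ls^\T\Ws$ is a column-wise linear constraint on $\gs$. Each pins down only the component along $\nabla V$ and leaves the orthogonal complement free. I would exploit the orthogonal decomposition induced by $\PsC = \id_n - \|\nabla V\|^{-2}\nabla V\nabla V^\T$: writing every solution as $\PsC$ applied to an arbitrary vector plus the particular $\nabla V$-direction term, and checking $\nabla V^\T\PsC = 0$ together with $\nabla V^\T\nabla V = \|\nabla V\|^2$, yields exactly formulas~(\ref{Eq:QME_solution_f}) and~(\ref{Eq:QME_solution_g}) with arbitrary $\fh,\gh$. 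Substituting these back then confirms that~(\ref{Eq:QME_solution}) solves~(\ref{Eq:QME}) for every $\fh,\gh$ and every $\jh$ on the ellipsoid, so the parameterization is indeed the \emph{general} solution.

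The main obstacle I anticipate is the $(2,2)$/ellipsoidal step: arguing that the positive semi-definiteness hypothesis is sharp (simultaneously necessary and sufficient) and that the Cholesky/Gram factorization yields a $\jh$ of the correct dimensions. A secondary but genuine technical point is the standing nondegeneracy assumption $\nabla V(\xs)\neq 0$, which is required for $\PsC$ and the $\|\nabla V\|^{-2}$ normalization to be well defined; since the $(1,1)$ and $(1,2)$ blocks constrain only the $\nabla V$-component of $\fs$ and $\gs$, it is precisely this nondegeneracy that makes the orthogonal decomposition exhaustive and the solution family complete.
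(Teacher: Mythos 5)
Your proposal is correct and follows essentially the same route as the paper's Appendix~D: the paper likewise splits the QME into the null space of $\As$ (a linear equation in $[\fs,\gs]$ solved by projecting onto the complement of $\nabla V$) and its complement (the quadratic, ellipsoidal condition on $\js$, with $\hs$ left free). The difference is only organizational --- you read the three KYP block equations directly off the QME and solve them in sequence, whereas the paper introduces auxiliary bias matrices $C_+$, $C_-$ and an orthogonal degree of freedom $U$ before changing variables back to $(\hh,\jh)$ --- and both arguments leave the same fine points implicit (the standing nondegeneracy $\nabla V\neq 0$, and the rank condition needed to factor $\Rs-\Ss^\T\Qs^{-1}\Ss-\Ws^\T\Ws$ as $M^\T M$ with $M$ having only $l$ rows).
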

\begin{proof}
    See Appendix~\ref{APP:QME}.
\end{proof}

The solution of the QME is divided into the null space and non-null space of the matrix $A$ on the equation~(\ref{Eq:QME_parameters}).
Since $\left[\begin{smallmatrix}\fs&\gs\\0&0\end{smallmatrix}\right]$ corresponds to the null space of the matrix $\As$, it is the solution of the linear equation and $\PsC[\fh, \gh]$ is the complementary space.
Considering the non-null space of $A$, $\js$ is a matrix on the ellipsoidal sphere by a positive definite matrix $-\Qs$, and the existence condition is that the radius of this ellipsoidal $\Rs - \Ss^\T \Qs^{-1}\Ss - \Ws^\T \Ws$ is a positive semi-definite matrix.

This lemma assumes that $\Qs$ is a negative definite matrix, but it can similarly be shown when $\Qs$ is a positive definite matrix.
If $\Qs$ has eigenvalues with both positive and negative signs, or if a complementary space exists, the solution needs to be written for each eigenspace and becomes complicated.

The result of Lemma~\ref{Lem:solution_of_QME} implies that the entire set of dissipative dynamics $\mathbb{S}_{\rm d}$ is determined by the parameters $(\ls, \Ws, V)$ and the intermediate variable $(\fh, \gh, \hh, \jh)$ in the general solution.
Noting that $\Ws$ can be reduced as a map of $\jh$ derived from the ellipsoidal condition~(\ref{Eq:ellipsoidal_equation}), the entire set of dissipative dynamics $\mathbb{S}_{\rm d}$ is partitioned by only two parameters $(\ls, V)$.

In the next section, we explicitly derive the differentiable projection onto the parametric subspace $\mathbb{S}_{\ls, V}$ of dissipative dynamics, excluding the direct path from input to output.

\subsection{Projection onto Dissipative Dynamics Subspace}\label{Sec:Dissipative_Projection}
Based on Lemma~\ref{Lem:solution_of_QME}, this section derives the projection onto the subspace of dissipative dynamics $\mathbb{S}_{\ls, V}$ for any given mappings $\ls$ and $V$.
In many applications, the direct path from input to output $\js$ is often excluded ($\js\equiv 0$).
In such cases, the negative definite matrix $\Qs$ assumed in Lemma~\ref{Lem:solution_of_QME} is no longer required.
The following theorem proposes a projection of dissipative dynamics assuming $\js\equiv 0$.

If the direct path $\js$ is not excluded, it is necessary to construct a projection that satisfies the ellipsoidal condition~(\ref{Eq:ellipsoidal_equation}) for $\js$.
The general case of $\js$, the projection onto the subspace of dissipative dynamics $\mathbb{S}_{\ls, V}$ is shown in Appendix~\ref{APP:general_dissipative_projection}.

\begin{theorem}[Dissipative Projection]~\label{Thm:Dissipative_Projection}
Assume that $R$ is positive semi-definite matrices. 
The following map $\mathcal{P}_{\ls,V} : (\fn,\gn,\hn ) \mapsto (\fm,\gm,\hmm)$ satisfying
\begin{align}
\fm &= \PsC \fn + \frac{\hn^\T \Qs \hn - \| \ls\|^2 }{\|\nabla V\|^2} \nabla V,\\
\gm &= \PsC \gn + \frac{2}{\|\nabla V\|^2} \nabla V \big( \hn^\T \Ss - \ls^\T \sqrt{\Rs}\big),\\
\hmm &= \hn
\end{align}
is a dissipative projection, where $\ls : \R^n \rightarrow\R^{m}$ and $V: \R^n \rightarrow \R_{\geq 0}$.
\end{theorem}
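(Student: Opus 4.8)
The plan is to verify the three defining properties of a dissipative projection from Definition~\ref{Def:dissipative_projection} for $\mathcal{P}_{\ls,V}$, namely (i) that its image lies in $\mathbb{S}_{\rm d}$, (ii) idempotency, and (iii) that the union of the images $\mathbb{S}_{\ls,V}$ over all admissible $(\ls,V)$ exhausts $\mathbb{S}_{\rm d}$, together with the (routine) differentiability. The guiding observation is that the three displayed formulas are exactly the general solution of Lemma~\ref{Lem:solution_of_QME} specialized to $\js\equiv 0$: setting $\jh=0$ collapses the ellipsoidal condition~(\ref{Eq:ellipsoidal_equation}) to $\Ws^\T\Ws=\Rs$, so the canonical choice $\Ws=\sqrt{\Rs}$ (which exists precisely because $\Rs$ is positive semi-definite, the only hypothesis we keep) satisfies it and makes the negative-definiteness of $\Qs$ superfluous. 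Throughout I would use the elementary identities $\nabla V^\T\PsC=0$, $\PsC\nabla V=0$, and $\PsC\PsC=\PsC$, which hold because $\PsC$ is the orthogonal projection onto the complement of $\operatorname{span}\nabla V$.

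For (i) I would substitute $(\fm,\gm,\hmm,0)$ into the KYP condition~(\ref{Eq:KYP_condition}) with the certificate $(\ls,\Ws=\sqrt{\Rs},V)$. Left-multiplying $\fm$ by $\nabla V^\T$ annihilates the $\PsC\fn$ term and leaves $\hmm^\T\Qs\hmm-\|\ls\|^2$, which is the first KYP equation; the analogous computation for $\tfrac12\nabla V^\T\gm$ yields $\hmm^\T\Ss-\ls^\T\sqrt{\Rs}$, the second; and the third reads $\sqrt{\Rs}^\T\sqrt{\Rs}=\Rs$, which holds by construction. Hence $\mathbb{S}_{\ls,V}\subseteq\mathbb{S}_{\rm d}$. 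For (ii), applying $\mathcal{P}_{\ls,V}$ a second time only replaces $\fn$ by $\fm$ and $\gn$ by $\gm$ inside the $\PsC$ terms; since $\PsC\fm=\PsC\PsC\fn+(\cdots)\,\PsC\nabla V=\PsC\fn$ and likewise for $\gm$, while $\hmm$ is already the output, the map reproduces $(\fm,\gm,\hmm)$, giving $\mathcal{P}_{\ls,V}\circ\mathcal{P}_{\ls,V}=\mathcal{P}_{\ls,V}$.

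The substantive step is (iii). Because $\mathcal{P}_{\ls,V}$ is idempotent, its image coincides with its fixed-point set, and a short calculation shows $(\fn,\gn,\hn)$ is fixed iff $\nabla V^\T\fn=\hn^\T\Qs\hn-\|\ls\|^2$ and $\tfrac12\nabla V^\T\gn=\hn^\T\Ss-\ls^\T\sqrt{\Rs}$, i.e.\ iff it satisfies the KYP condition with $\Ws=\sqrt{\Rs}$. Thus $\bigcup_{\ls,V}\mathbb{S}_{\ls,V}$ is the set of dissipative systems admitting a certificate of this special form, and the task reduces to showing every dissipative $(\fs,\gs,\hs,0)$ admits one. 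Given the (possibly $q$-dimensional) certificate $(\ls_0,\Ws_0,V_0)$ supplied by the nonlinear KYP lemma, I would keep $V=V_0$ and seek $\ls\in\R^m$ with $\ls^\T\sqrt{\Rs}=\ls_0^\T\Ws_0$ and $\|\ls\|^2=\|\ls_0\|^2$.

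This reconciliation of the arbitrary root $\Ws_0$ with the fixed root $\sqrt{\Rs}$ is the main obstacle. The key facts are that the row space of $\Ws_0$ equals $\operatorname{ran}\Rs=\operatorname{ran}\sqrt{\Rs}$ (since $\Ws_0^\T\Ws_0=\Rs$), so the linear system $\ls^\T\sqrt{\Rs}=\ls_0^\T\Ws_0$ is solvable, and its minimal-norm solution has squared norm $\ls_0^\T\Ws_0\,\Rs^{+}\Ws_0^\T\ls_0=\|P\ls_0\|^2\le\|\ls_0\|^2$, where $P$ is the orthogonal projection onto $\operatorname{col}\Ws_0$. Since the remaining freedom lies in $\ker\sqrt{\Rs}$, I can add a kernel vector to inflate the norm exactly to $\|\ls_0\|^2$, producing the desired $\ls$. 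The delicate points to handle carefully are the case of singular $\Rs$ (where pseudoinverses and the partial-isometry relation $\Ws_0=U\sqrt{\Rs}$ replace plain inverses) and the measurable/differentiable dependence of the padded $\ls$ on $\xs$; the latter does not affect the set-level union required by Definition~\ref{Def:dissipative_projection}. Finally, differentiability of $\mathcal{P}_{\ls,V}$ is immediate from the closed-form expressions wherever $\nabla V\neq 0$.
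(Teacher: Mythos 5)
Your conditions (i) and (ii) track the paper's own proof almost exactly: the paper likewise obtains the displayed formulas by setting $\jh=\js=0$ in Lemma~\ref{Lem:solution_of_QME}, so that the ellipsoidal condition~(\ref{Eq:ellipsoidal_equation}) collapses to $\Ws^\T\Ws=\Rs$, writes $\Ws=U\sqrt{\Rs}$ for an orthogonal $U$ absorbed into $\tilde{\ls}=U^\T\ls$, and proves idempotency with precisely the identities $\PsC\nabla V=0$ and $\PsC\circ\PsC=\PsC$. Whether one then checks membership in $\mathbb{S}_{\rm d}$ by substituting into (\ref{Eq:KYP_condition}) directly (your route) or by citing the general solution (the paper's) is immaterial.

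The divergence is in the coverage condition $\bigcup_{\ls,V}\mathbb{S}_{\ls,V}=\mathbb{S}_{\rm d}$. The paper dispatches it in one sentence by appealing to Lemma~\ref{Lem:solution_of_QME} being the general solution; its substitution $\Ws=U\sqrt{\Rs}$, $\tilde{\ls}=U^\T\ls$ tacitly takes $U$ square, i.e.\ $q=m$, which is what makes $\|\tilde{\ls}\|=\|\ls\|$. You instead confront the general $q$-dimensional certificate head-on, which is more honest, but your padding step is where the gap sits --- and you have the delicate case backwards. When $\Rs$ is singular, $\ker\sqrt{\Rs}\neq\{0\}$ and the norm inflation goes through; the problem is $\Rs\succ 0$ with $q>m$: then $\ls=\sqrt{\Rs}^{-1}\Ws_0^\T\ls_0$ is the \emph{unique} solution of $\ls^\T\sqrt{\Rs}=\ls_0^\T\Ws_0$, there is no kernel direction to pad into, and $\|\ls\|^2=\|P\ls_0\|^2<\|\ls_0\|^2$ whenever $\ls_0$ has a component outside $\operatorname{col}\Ws_0$. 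This case is realizable: when the pointwise dissipation inequality is strict, the factorization $d(u,y)=\|\Ws u+\ls\|^2$ in the proof of the KYP lemma genuinely requires $q=m+1$, and then no $m$-dimensional $\ls$ works with $V=V_0$ fixed. Closing the argument would require modifying the storage function as well, which neither you nor the paper does. In short: same skeleton as the paper, a more careful treatment of the one nontrivial step, but the repair you propose does not yet close it.
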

\begin{proof}
See Appendix~\ref{APP:Proof_of_main_theorem}.
\end{proof}
Note that $\sqrt{R}$ is the root of a positive semi-definite matrix $R$, satisfying $\sqrt{R} \sqrt{R} = R$ and $\sqrt{R}$ is symmetric.

Projections that strictly guarantee internal stability, input-output stability, and energy conservation are achieved by designing the supply rate parameters $(\Qs, \Ss, \Rs)$.
The projection that guarantees internal stability coincide with the literature~\cite{Manek2019}, and the projection that guarantees input-output stability corresponds with another study~\cite{kojima2022learning}.
For details on the differences from previous studies, please refer to the Appendix~\ref{App:Difference_of_our_Previous Study}.
Below, we demonstrate differentiable projections that guarantee each of these time-series characteristics.

\begin{corollary}[Stable Projection] \label{Cor:Proj_Stable}
The following map $\mathcal{P}_{V} : (\fn,\gn,\hn ) \mapsto (\fm,\gm,\hmm)$:
\begin{align}
\begin{aligned}
\fm &= \fn  -  \frac{\nabla V}{\|\nabla V\|^2} \mathrm{ReLU}(\nabla V^\T \fn) ,\\
\gm&= \gn,\quad \hmm = \hn    
\end{aligned}
\end{align}
is a projection into stable dynamics.
\end{corollary}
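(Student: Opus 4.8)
The plan is to verify directly that $\mathcal{P}_V$ satisfies the defining properties of a projection in the sense of Definition~\ref{Def:dissipative_projection} — idempotence together with the covering identity $\bigcup_V \mathbb{S}_V = \mathbb{S}_{\rm stable}$ — after first confirming that its image consists of stable dynamics. I would read internal stability as the degenerate supply rate $w\equiv 0$ (equivalently $\Qs=\Ss=\Rs=0$), for which the dissipativity inequality~(\ref{Eq:dissipativity}) collapses to $V(\xs(t_1))-V(\xs(t_0))\le 0$; that is, $V$ is a Lyapunov function that does not increase along the autonomous flow $\dot\xs=\fm(\xs)$. Hence the only condition to enforce is $\nabla V^\T\fm\le 0$, and since the input channel is inactive when $\us=0$, the maps $\gn,\hn$ may be left untouched.

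The central computation is to evaluate $\nabla V^\T\fm$. Using $\nabla V^\T\nabla V=\|\nabla V\|^2$, the projected drift satisfies
\begin{align*}
\nabla V^\T\fm = \nabla V^\T\fn - \mathrm{ReLU}(\nabla V^\T\fn) = \min(\nabla V^\T\fn,\,0)\le 0,
\end{align*}
which one sees by splitting into the case $\nabla V^\T\fn\ge 0$ (the $\mathrm{ReLU}$ cancels the whole inner product) and the case $\nabla V^\T\fn<0$ (the $\mathrm{ReLU}$ vanishes, so $\fm=\fn$). Along trajectories this gives $\tfrac{d}{dt}V(\xs(t))=\nabla V^\T\fm(\xs)\le 0$, so the projected system is stable in the sense of~(\ref{Eq:dissipativity}) with $w\equiv 0$, and therefore the image of $\mathcal{P}_V$ lies in the stable subspace.

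Idempotence then follows immediately: applying $\mathcal{P}_V$ a second time feeds the non-positive quantity $\nabla V^\T\fm$ into the $\mathrm{ReLU}$, which returns $0$, so $\mathcal{P}_V(\fm,\gm,\hmm)=(\fm,\gm,\hmm)$. For the covering property, I would observe that the fixed points of $\mathcal{P}_V$ are exactly the dynamics with $\nabla V^\T\fn\le 0$; consequently, any stable $(\fn,\gn,\hn)$ certified by some Lyapunov function $V_0$ has a vanishing $\mathrm{ReLU}$ term, so $\mathcal{P}_{V_0}$ fixes it and places it in $\mathbb{S}_{V_0}$. Combined with the inclusion of the previous paragraph, this yields $\bigcup_V\mathbb{S}_V=\mathbb{S}_{\rm stable}$.

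The main subtlety I would flag explicitly is the relationship to Theorem~\ref{Thm:Dissipative_Projection}. Setting $\Qs=\Ss=\Rs=0$ there gives $\fm=\PsC\fn-\tfrac{\|\ls\|^2}{\|\nabla V\|^2}\nabla V$, so the $\mathrm{ReLU}$ form is recovered precisely by the data-dependent choice $\|\ls\|^2=\mathrm{ReLU}(-\nabla V^\T\fn)$, which is legitimate since it is non-negative; this is exactly why $\ls$ disappears as a free parameter and only $V$ survives. The one genuine difference is the treatment of $\gn$: Theorem~\ref{Thm:Dissipative_Projection} would return $\PsC\gn$ (enforcing $\nabla V^\T\gn=0$, as dissipativity for \emph{all} inputs demands), whereas internal stability concerns only the autonomous flow and imposes no constraint on the input channel, so $\gm=\gn$ is permissible. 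I would also record the standing well-definedness assumption $\nabla V\neq 0$ away from the equilibrium, needed so that the division by $\|\nabla V\|^2$ is valid.
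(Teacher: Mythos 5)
Your proof is correct, but it takes a genuinely different route from the paper's. The paper proves this corollary in one line, as a pure specialization of Theorem~\ref{Thm:Dissipative_Projection} with $\Qs=\Ss=\Rs=0$; you instead verify the three defining properties of a projection directly — image contained in the stable set via $\nabla V^\T\fm=\min(\nabla V^\T\fn,0)\le 0$, idempotence because the second $\mathrm{ReLU}$ receives a non-positive argument, and covering because any $V_0$-certified stable dynamics is a fixed point of $\mathcal{P}_{V_0}$. Your self-contained argument is arguably more informative than the paper's, because your ``subtlety'' paragraph identifies the two steps that the one-line proof silently elides: (a) the theorem's $\fm=\PsC\fn-\tfrac{\|\ls\|^2}{\|\nabla V\|^2}\nabla V$ only becomes the $\mathrm{ReLU}$ formula under the state-dependent choice $\|\ls\|^2=\mathrm{ReLU}(-\nabla V^\T\fn)$, which is exactly the substitution the paper implicitly makes; and (b) the theorem with $\Ss=\Rs=0$ returns $\gm=\PsC\gn$, not $\gm=\gn$, so the corollary is \emph{not} a literal instance of the theorem on the input channel — one must separately argue, as you do, that internal stability constrains only the autonomous flow $\dot{\xs}=\fm(\xs)$ and therefore leaves $\gn$ free. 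The trade-off is the usual one: the paper's derivation is economical and keeps all stability notions under the single dissipativity umbrella, whereas your direct verification is elementary, does not rely on the heavier Lemma~\ref{Lem:solution_of_QME} machinery (whose negative-definiteness hypothesis on $\Qs$ is in any case violated by $\Qs=0$), and makes explicit which reading of ``stable'' the corollary actually certifies.
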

\begin{proof}
When $Q=R=S=0$, it is derived from the theorem.
\end{proof}
\begin{corollary}[Input-Output Stable Projection] \label{Cor:Proj_L2Stalbe}
The following map $\mathcal{P}_{\ls,V} : (\fn,\gn,\hn ) \mapsto (\fm,\gm,\hmm)$:
\begin{align}
\begin{aligned}
\fm &= \PsC \fn -\frac{\nabla V}{\|\nabla V\|^2}\big(\| \hn \|^2 + \| \ls\|^2\big),\\
\gm &= \PsC \gn -2\gamma \frac{\nabla V\ls^\T}{\|\nabla V\|^2}\\
\hmm &= \hn
\end{aligned}
\end{align}
is a projection into input-output ($\mathscr{L}_2$) stable dynamics and the $\gamma >0$ is the input-output gain.
\end{corollary}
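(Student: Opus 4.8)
The plan is to derive this corollary as an immediate specialization of the Dissipative Projection theorem (Theorem~\ref{Thm:Dissipative_Projection}), choosing the supply-rate parameters $(\Qs,\Ss,\Rs)$ that encode $\mathscr{L}_2$ input-output stability. As stated earlier in the paper, input-output stability with gain $\gamma>0$ corresponds to the supply rate $w(\us,\ys)=\gamma^2\|\us\|^2-\|\ys\|^2$. Matching this against the quadratic form~(\ref{Eq:supply_function}), I would first read off $\Qs=-\id_l$, $\Ss=\zeros$, and $\Rs=\gamma^2\id_m$, since the coefficient of $\|\ys\|^2$ is $-1$, the coefficient of $\|\us\|^2$ is $\gamma^2$, and there is no cross term.

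Next I would check the hypothesis of Theorem~\ref{Thm:Dissipative_Projection}, namely that $\Rs$ is positive semi-definite. This holds because $\Rs=\gamma^2\id_m$ with $\gamma>0$, so the theorem applies verbatim. It then remains only to substitute the chosen parameters into the three projection formulas and simplify. For $\fm$, the identity $\hn^\T\Qs\hn=-\|\hn\|^2$ turns the numerator $\hn^\T\Qs\hn-\|\ls\|^2$ into $-(\|\hn\|^2+\|\ls\|^2)$, giving the stated expression. For $\gm$, setting $\Ss=\zeros$ kills the $\hn^\T\Ss$ term, while $\sqrt{\Rs}=\gamma\id_m$ (the symmetric square root of $\gamma^2\id_m$) converts $\ls^\T\sqrt{\Rs}$ into $\gamma\ls^\T$, producing the factor $-2\gamma\,\nabla V\ls^\T/\|\nabla V\|^2$. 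The output map is inherited unchanged, $\hmm=\hn$.

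There is no genuinely hard step here: all the content is carried by Theorem~\ref{Thm:Dissipative_Projection}, and the corollary is simply the result of plugging in one distinguished choice of $(\Qs,\Ss,\Rs)$. The only points meriting a moment's care are confirming that the scalar supply rate $\gamma^2\|\us\|^2-\|\ys\|^2$ really does correspond to the block-diagonal matrix with diagonal blocks $-\id_l$ and $\gamma^2\id_m$ and vanishing off-diagonal block, and that this $\Rs$ satisfies the positive semi-definiteness precondition of the theorem. Once these identifications are fixed, the projection formulas follow mechanically, so I expect the final writeup to be one or two lines.
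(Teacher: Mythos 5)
Your proposal matches the paper's own proof exactly: the paper also derives this corollary by substituting $\Qs=-\id_l$, $\Ss=\zeros$, $\Rs=\gamma^2\id_m$ into Theorem~\ref{Thm:Dissipative_Projection}, and your verification of the positive semi-definiteness hypothesis and the simplifications $\hn^\T\Qs\hn=-\|\hn\|^2$ and $\sqrt{\Rs}=\gamma\id_m$ are correct. No gaps.
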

\begin{proof}
When $Q = -I_l$, $S=0$, and $R=\gamma^2 I_m$, it is derived from the theorem.
\end{proof}

The definition of dissipativity is expressed as an inequality involving the integral of the supply rate and the change of the storage function. 
Assuming $\Rs=0$ and $\ls\equiv 0$, this becomes an equality condition.
This allows for the construction of a projection that strictly preserves the energy conservation law.

\begin{corollary}[Energy Conservation Projection]~\label{Thm:Energy_Conservation_Projection}
Assuming $R=0$, if the following mapping $\mathcal{P}_{V} : (\fn,\gn,\hn ) \mapsto (\fm,\gm,\hmm)$ is given by
\begin{align}
\begin{aligned}
\fm&= \PsC \fn + \frac{\nabla V}{\|\nabla V\|^2}\hn^\T \Qs \hn,\\
\gm &= \PsC \gn + 2\frac{\nabla V}{\|\nabla V\|^2}\hn^\T \Ss,\\
\hmm &= \hn
\end{aligned}
\label{Eq:energy_conservation_projection}
\end{align}
then the input-output system~(\ref{Eq:main_system}) satisfies 
\begin{align*}
V(\xs(t_1)) - V(\xs(t_0)) = \int_{t_0}^{t_1}w\big(\us(s),\ys(s)\big)\mathrm{d}s.    
\end{align*}
\end{corollary}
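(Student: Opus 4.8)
The plan is to show that the proposed projection forces the dissipativity inequality~(\ref{Eq:dissipativity}) to hold with equality by first establishing the pointwise identity $\frac{d}{dt}V(\xs(t)) = w(\us(t),\ys(t))$ along every trajectory, and then integrating in time. First I would observe that the formulas~(\ref{Eq:energy_conservation_projection}) are exactly the special case of the general Dissipative Projection (Theorem~\ref{Thm:Dissipative_Projection}) obtained by setting $\Rs = \zeros$ and $\ls \equiv \zeros$: indeed the $\|\ls\|^2$ correction in $\fm$ and the $\ls^\T\sqrt{\Rs}$ correction in $\gm$ both vanish under these choices. Since we operate in the $\js \equiv \zeros$ regime, the output reduces to $\ys = \hmm(\xs) = \hn(\xs)$, so the supply rate~(\ref{Eq:supply_function}) with $\Rs = \zeros$ simplifies to $w(\us,\ys) = \hn^\T \Qs \hn + 2\,\hn^\T \Ss \us$.

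The central computation is the time derivative of the storage function along the projected dynamics. Writing $\frac{d}{dt}V(\xs) = \nabla V^\T(\xs)\,\dot{\xs} = \nabla V^\T \fm + \nabla V^\T \gm \us$, I would exploit the key orthogonality property of the complementary projector, namely $\nabla V^\T \PsC = \zeros$, which follows immediately from $\PsC = \id_n - \frac{1}{\|\nabla V\|^2}\nabla V \nabla V^\T$. This annihilates the $\PsC \fn$ and $\PsC \gn$ contributions in $\fm$ and $\gm$, leaving only the rank-one correction terms; a short calculation then gives $\nabla V^\T \fm = \hn^\T \Qs \hn$ and $\nabla V^\T \gm = 2\,\hn^\T \Ss$. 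Combining these yields $\frac{d}{dt}V(\xs) = \hn^\T \Qs \hn + 2\,\hn^\T \Ss \us$, which is precisely $w(\us,\ys)$.

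Finally I would integrate this pointwise equality from $t_0$ to $t_1$, invoking the fundamental theorem of calculus on the left-hand side to obtain $V(\xs(t_1)) - V(\xs(t_0)) = \int_{t_0}^{t_1} w(\us(s),\ys(s))\,\mathrm{d}s$, as claimed. I do not anticipate a genuine obstacle, since the equality is structurally built into the projection: the only slack in the KYP inequality originates from the $-\|\ls\|^2 \leq 0$ term appearing in the first KYP equation~(\ref{Eq:KYP_condition}), and setting $\ls \equiv \zeros$ removes it entirely so that the differential dissipation inequality tightens to an identity. The one point requiring minor care is the implicit nondegeneracy assumption $\nabla V \neq \zeros$ needed for $\PsC$ and the rank-one terms to be well defined, which is already presupposed throughout the construction.
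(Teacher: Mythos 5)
Your proposal is correct and follows essentially the same route as the paper: both reduce the claim to the pointwise identity $\dot{V}(\xs) = w(\us,\ys)$ along the projected dynamics and then integrate, the only slack term being $\|\ls + \Ws \us\|^2$, which vanishes once $\ls \equiv 0$ and $\Ws = \sqrt{\Rs} = 0$. The paper obtains this identity by quoting equation~(\ref{Eq:non_linear_KYP_lemma_V_dot}) from the KYP-lemma proof, whereas you recompute it directly from the projection formulas via $\nabla V^\T \PsC = 0$; this is a cosmetic difference, and your direct calculation of $\nabla V^\T \fm$ and $\nabla V^\T \gm$ checks out.
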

\begin{proof}
See Appendix~\ref{APP:Proof_of_Energy_Conservation_Projection}.
\end{proof}

The energy conservation projection supports the Hamiltonian equations, which conserve energy, and the port-Hamiltonian systems, where energy exchange is explicitly defined.
In this context, the storage function $V$ corresponds to the Hamiltonian, and the supply rate $w(u,y)$ corresponds to the energy dissipation in the port-Hamiltonian system.

A similar concept to dissipativity in evaluating input-output systems is passivity.
Since passivity can be described as the exchange of energy, it can naturally be addressed in this study by adjusting the supply rate parameter for dissipativity (see Appendix~\ref{APP:Passivity})

Note that dissipative projections are not unique because they depend on space metrics.
Additionally, since the dissipative constraint is nonlinear, explicitly describing the underlying space metric is difficult.
For instance, existing study~\cite{kojima2022learning} presents projections onto a non-Hilbert metric spaces under a simple quadratic constraint called input-output stability.
For further details, please refer to the Appendix~\ref{APP:Another_Dissipative_Projection}.

\begin{table*}[t]
    \centering
    \begin{tabular}{|c|c||r|r|r|r|r|}
    \hline 
     Train & Test & Naive  & Stable&  IO stable& Conservation & Dissipative \\ \hline 
     \hline 
     \multirow{3}{*}{\begin{tabular}{c}Rectangle\\(N=100)\end{tabular}}&Rectangle  
                       &  $0.250 \pm 0.184$ & $0.252 \pm 0.181$ & $0.194 \pm 0.095$ & ${\bf 0.077} \pm 0.066$ & $0.212 \pm 0.144$ \\ \cline{2-7}
     &Step       &  $0.205 \pm 0.195$ & $0.240 \pm 0.203$ & $0.225 \pm 0.115$ & ${\bf 0.046} \pm 0.021$ & $0.197 \pm 0.147$ \\\cline{2-7}
     &Random       &  $0.049 \pm 0.044$ & $0.047 \pm 0.037$ & $0.068 \pm 0.036$ &  ${\bf0.023} \pm 0.031$ & $0.040 \pm 0.028$ \\\hline
     \multirow{3}{*}{\begin{tabular}{c}Rectangle\\  (N=1000)\end{tabular}}&Rectangle 
                       &  ${\bf 0.029} \pm 0.000$ & $0.029 \pm 0.000$ & $0.029 \pm 0.000$ & $0.029 \pm 0.000$ & $0.060 \pm 0.061$ \\ \cline{2-7}
     & Step      &  $0.024 \pm 0.000$ & $0.024 \pm 0.001$ & $0.024 \pm 0.001$ & ${\bf 0.024} \pm 0.001$ & $0.039 \pm 0.029$ \\\cline{2-7}
     & Random      &  ${\bf 0.005} \pm 0.001$ & $0.009 \pm 0.003$ & $0.007 \pm 0.002$ & $0.006 \pm 0.003$ & $0.021 \pm 0.030$ \\\hline     
     \end{tabular}
    \caption{The prediction error (RMSE) of the mass-spring-damper benchmark}
    \label{tab:result_linear}
\end{table*}

\subsection{Loss function}
The optimization problem~(\ref{Eq:optimization_projection_based}) based on the dissipative projection requires careful attention to learning methods, as there is a degree of freedom in the internal parameters.
Here, we define the regularized loss function as follows:
\begin{align}
     \mathrm{Loss} \triangleq \mathrm{E}_\mathcal{D}\Big[ \|y^* - y\|^2 \Big] + \lambda_1 \mathcal{L}_{\rm proj} + \lambda_2 \mathcal{L}_{\rm recons}, \label{Eq:Loss}
\end{align}
where $\lambda_1$ and $\lambda_2$ are positive hyperparameters.
In the first term, the squared error of the data point $y^*$ and the prediction result $y$ can be computed by solving the neural ODE represented as the equation~(\ref{Eq:main_system}).

The second term $\mathcal{L}_{\rm proj}$ prevents the distance before and after projection from diverging by reducing a degree of freedom of projection, that is,
\begin{align*}
     \mathcal{L}_{\rm proj} &\triangleq  \mathrm{E}\Big[\left\|
     (\mathrm{id}_\mathbb{S}-\mathcal{P}_\theta)(f,g,h,j)
     \right\|^2 \Big]\\
     &= \mathrm{E}_{x\sim \mathcal{N}}\Big[ \|\fn(x) - \fm(x) \|^2 + \|\gn(x) - \gm(x) \|^2 \Big],
\end{align*}
where $\mathrm{id}_\mathbb{S}$ is the identity map on the set of dynamics $\mathbb{S}$.
In our implementation of Theorem~\ref{Thm:Dissipative_Projection}, only $f$ and $g$ are involved in the projection.
Hence, $\mathcal{L}_{\rm proj}$ can be calculated by the last formula, where the expectation by sampling from an $n$-dimensional standard normal distribution $\mathcal{N}$.
In the following experiments, we use 100 samples to compute this expected values.
Here, we emphasize that this loss term is used merely to reduce the degrees of freedom, even if its value is non-zero, our projected dynamics  are always dissipative.

The last term is to prevents $h$ from becoming degenerate in the early stages of gradient-based learning, that is,
\begin{align*}
      \mathcal{L}_{\rm recons} &\triangleq \mathrm{E}_{x\in\mathcal{X}}\Big[ \|x - \eta(h(x))\|\Big],
\end{align*}
where $\eta: \R^l \rightarrow \R^n$ is represented by an additional neural network and $\mathcal{X}$ is a set of $\eta(x)$ corresponding to $y$ in the the first term.
where the reconstruction map $\eta: \R^l \rightarrow \R^n$ is represented by an additional neural network and $\mathcal{X}$ is a set of samples on the solution of the neural ODE when computing the first term. 
The reconstruction term is intended to prevent $h$ from becoming a trivial function $h(x)\equiv 0$ during learning. In our gradient-based learning, we initialize the neural network weights to values close to $0$, and start learning from internal-state behaviors $x(t)$ around $0$.
In that case, $h$ is learned as $h(x)\equiv 0$ at an early stage and does not change. 
The reconstruction term is intended to prevent this, empirically.

In this study, the map $\ls$ in the dissipative projection, the reconstruction map $\eta$, and nonlinear dynamics $(f, g, h)$ are parameterized by using neural networks.
All of the neural networks are trained using this loss function (\ref{Eq:Loss}).

\section{Result} \label{Sec:Result}
\begin{figure}[t]
    \centering
     \includegraphics[width=1.0\linewidth]{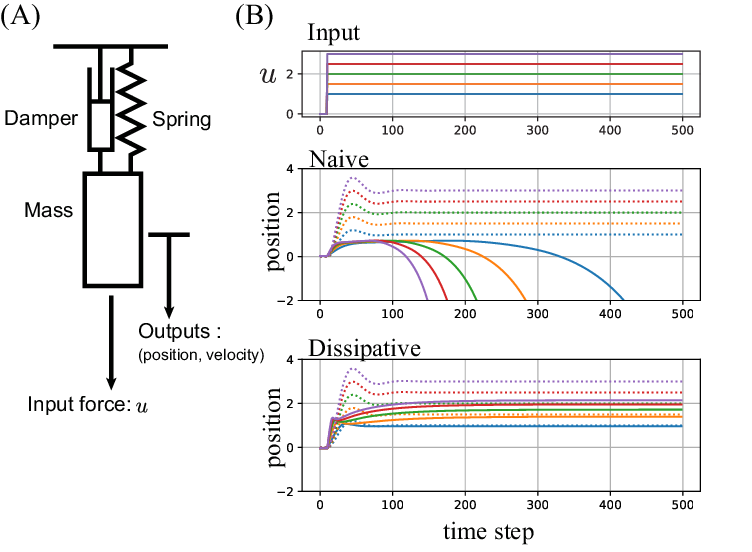}
    \caption{(A) Sketch of mass-spring-damper system.
    (B) Prediction results for out-of-domain inputs, i.e., five long step signals with different amplitudes.
    The top figure is the input signal behaviors, the middle figure shows the position of the mass predicted by the model trained using the naive model, and the bottom figure shows the position predicted by the proposed dissipative model.
    The dashed lines are the plots of the ground truth, and each color of lines shows the results with the same input signals.
    }
    \label{fig:linear_result_long}
\end{figure}

We conducted three experiments to evaluate our proposed method.
The first experiment uses a benchmark dataset generated from a mass-spring-damper system, which is a classic example from physics and engineering.
In the next experiment, we evaluate our methods by an $n$-link pendulum system, a nonlinear dynamical system related to robotic arm applications.
Finally, we applied our method to learning an input-output fluid system using a fluid simulator.

We evaluate the prediction error at each time point using root mean square error, which we call RMSE$(t)$, in the output domain.
The aim is to observe the error affected by satisfying the dissipative system over time.
In addition, we used the time-averaged RMSE$(t)$, which we refer to simply as RMSE, to evaluate the prediction error of models.
In the following experiments, we performed experiments on the trained model by changing the input, such as test input signal separated from training data, signals with different input lengths, and signals of different types, using these evaluation metrics.
In the following experiment section, $N$ denotes the number of pairs of input-output signals for training.

We retry five times for all experiments and show the mean and standard deviations of the both metrics.
For simplicity in our experiments, the sampling step $\Delta t$ for the output $y$ is set as constant and the Euler method is used to solve neural ODEs.
The initial state $x_0$ in this ODE is fixed as $0$ for simplicity.
The hyperparameters including the number of layers in the neural networks, the learning rate, optimizer, and the weighted decay are determined using the tree-structured Parzen estimator (TPE) implemented in Optuna  \cite{optuna_2019} (see Appendix~\ref{APP:result_bo}).

\subsection{Mass-spring-damper Benchmark} \label{Sec:linear_result}

\begin{figure}[t]
    \centering
     \includegraphics[width=1.0\linewidth]{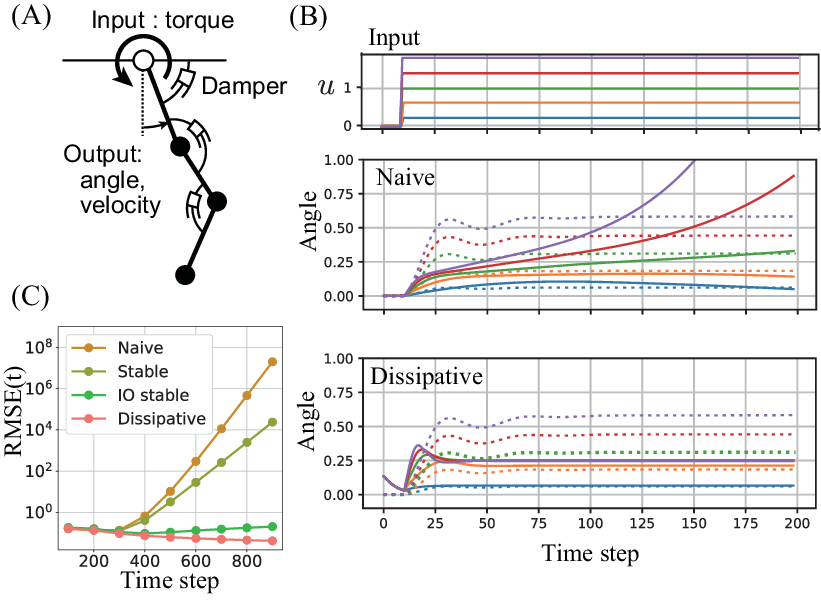}
    \caption{(A) Sketch of $n$-link pendulum model.
    (B) Prediction results for out-of-domain inputs.
    The top figure is the input signal behaviors, the middle figure shows the angle of the first pendulum predicted by the model trained using the naive model, and the bottom figure shows the angle predicted by the proposed dissipative model.
    The dashed lines are the plots of the ground truth, and each color of lines shows the results with the same input signals.
    (C) RMSE$(t)$ related to long input step signal.
    }
    \label{fig:nlink_result_long}
\end{figure}

\begin{figure*}[t!]
    \centering
     \includegraphics[width=0.9\linewidth]{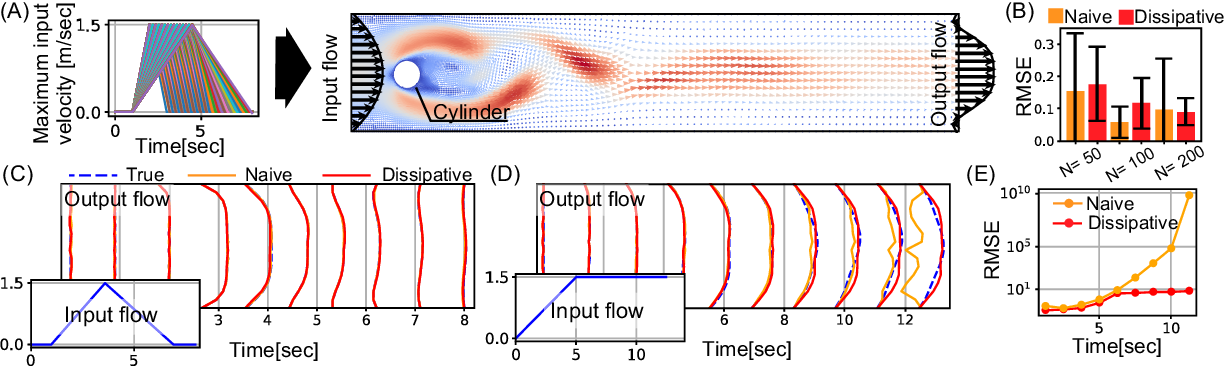}
    \caption{
(A) Sketch of the flow around cylinder model.
(B) Time-average RMSE of test triangular (in-domain) waves by changing the number of training inputs$N$.
(C) The predicted output flow for a test triangular (in-domain) wave with $N=100$. 
Each curved line represents the spatial distribution of the output flow at each time point.
The blue, orange, and red lines shows the ground truth, the output flow predicted by the naive model, and the output predicted by our dissipative model, respectively.
(D) The predicted output flow for a test clip (out-of-domain) wave with $N=100$.
(E)  RMSE$(t)$ for long time simulation with test clip waves.
    }
    \label{fig:flow_around_cylinder}
\end{figure*}

We generate input and output signals for this experiment by a mass-spring-damper system.
This dynamics was chosen as the first simple example because it is linear and its properties can be easily understood analytically.

Table~\ref{tab:result_linear} shows predictive performance comparing a naive model, the proposed stable model (Corollary~\ref{Cor:Proj_Stable}),
input-output stable model (Corollary~\ref{Cor:Proj_L2Stalbe}), energy conservation model (Theorem~\ref{Thm:Energy_Conservation_Projection}), and dissipative model (Theorem~\ref{Thm:Dissipative_Projection}).
The naive model simply use neural networks as $(f,g,h)$, which is trained by minimizing the squared error.
The first and fourth rows of this table shows the results of evaluation using $N$-inputs rectangle signals for training data and different $0.1 \times N$ rectangle input signals for testing.

Since our focus is on input-output systems, the model may be influenced by the type of input.
Therefore, we considered a scenario where only data from input rectangle waves could be collected during training and evaluated the model by input signals generated by step functions and random walks, in addition to rectangle waves, during testing.
We call such scenarios out-of-domain cases, shown in Table~\ref{tab:result_linear}.
The hyperparameters related to these models and the detail of the experimental setting are described in the Appendix~\ref{APP:exaple_linear}.

The proposed conservation and dissipative models exhibited high predictive performance with unforeseen inputs at $N=100$.
In particular, conservation showed a statistically significant improvement over the naive model.
This is because the conservation model utilizes the energy relationships in the most rigorous manner.
Note that the supply rate $w$ includes not only the terms corresponding to increases in internal energy due to external forces but also the terms corresponding to energy dissipation by dampers.
When the data size was increased to 1000, predictive accuracy differences between the methods were small.
These observations suggest that enforcing dissipative or conservation properties ensures high predictive performance, particularly with smaller data sizes, due to better match with the underlying data-generating system.

Figure~\ref{fig:linear_result_long} illustrates another out-of-domain case when unexpectedly longer step inputs (1000 steps), exceeding those used during training (100 steps), are given.
The results show that while the naive method may diverge with such an unexpected long input, ensuring dissipative properties allows the output to remain bounded.

\subsection{$n$-link Pendulum Benchmark} \label{Sec:nlink_result}

Next, to demonstrate the nonlinear case,
we adopt the $n$-link pendulum system, characterized by multiple connected pendulums.
The movement of each link is governed by nonlinear equations of motion, leading to extremely complex overall system behavior (Figure~\ref{fig:nlink_result_long}~(A)).

Figure~\ref{fig:nlink_result_long}~(B) and (C) show the behavior of the system when an input different from the 100-step rectangle waves used during training is input.
The results show that divergence is evident from around 200 steps, indicating that naive and stable models may diverge when receiving an unexpectedly long input, but that the output is appropriately bounded when IO stability and dissipative are guaranteed.
All numerical experiment results, including errors for the same type of input as used during training, are listed in the Appendix~\ref{APP:exaple_nlink}.

\subsection{Fluid System Benchmark}
\label{Sec:flow_result}

In the final part of this study, we aim to predict the input-output relationship of fluid flow around a cylinder (Figure~\ref{fig:flow_around_cylinder} (A))\cite{Schäfer1996}.
This phenomenon involves complex behaviors, such as periodic oscillations and flow instabilities, resulting from the formation of K\'{a}rm\'{a}n vortex streets.
In this experiment, the left and right flow velocities are spatially discretized into 16 divisions, which are the inputs and outputs of this system.

We constructed a predictive model that guarantees dissipativity based on the results of fluid simulations using a triangular wave input (detailed conditions are provided in the Appendix~\ref{APP:flow_around_cylinder}).
The prediction results with test triangular wave inputs showed good accuracy for \( N = \{50, 100, 200\} \), and a naive neural network also demonstrated comparable accuracy (Figure.~\ref{fig:flow_around_cylinder} (B),(C)).
Using out-of-domain clipped wave inputs, we compared the trained predictive model against long-term simulation.
The results showed that the model maintained good predictive accuracy even for extended prediction periods, outperforming the naive model. 
The RMSE$(t)$ begins to increase at the time matching the training signal length (8 seconds) and becomes more pronounced over longer input signal (Figure.~\ref{fig:flow_around_cylinder} (D),~(E)).

\section{Conclusion}
In this study we analytically derived a general solution to the nonlinear KYP lemma and a dissipative projection.
Furthermore, we showed that our proposed methods that strictly guarantee dissipativity including internal stability, input-output stability, and energy conservation.
Finally, we confirmed the effectiveness of our method, particularly its robustness to out-of-domain input, using both linear systems and complex nonlinear systems, including an $n$-link pendulum and fluid simulations.
A limitation of this study is the requirement to determine the dissipative hyperparameters based on the rough properties of the target system.
For future work, this research will lead to system identification to control real-world dissipative systems.

\section*{Acknowledgments}
This research was supported by JST Moonshot R\&D Grant Number JPMJMS2021 and JPMJMS2024.
This work was also supported by JSPS KAKENHI Grant No.21H04905 and CREST Grant Number JPMJCR22D3, Japan.

\bibliography{references}

\clearpage
\appendix

\section{Proof of the nonlinear KYP lemma {\cite[Lemma 4.101]{brogliato2020dissipative}}}\label{APP:Proof_of_KYP_lemma}

{\bf Sufficient Condition}~:~
Using nonlinear KYP condition~(\ref{Eq:KYP_condition}), the time derivative of $V$ along the dynamics~(\ref{Eq:main_system}) is calculated as
\begin{align}
\dot{V}(x) &=  w(u,y) + \|\ls(x) + \Ws(x) u\|^2.\label{Eq:non_linear_KYP_lemma_V_dot}
\end{align}
Therefore, the integral becomes a dissipative condition~(\ref{Eq:dissipativity}).
Note that, if $l(x) \equiv 0$ and  $W(x)\equiv 0$, the dynamics~(\ref{Eq:main_system}) satisfies equivalent condition of dissipativity~(\ref{Eq:dissipativity}) :
\begin{align*}
V(\xs(t_1)) - V(\xs(t_0)) =\int_{t_0}^{t_1}w\big(\us(s),\ys(s)\big)\mathrm{d}s.
\end{align*}

{\bf Necessary Condition}~:~
If the system is dissipative, there exist $V :\R^n \rightarrow  \Rnn$ such that the following equation is satisfies:
\begin{align}
\begin{aligned}
d(u,y) &\triangleq w(u,y) - \dot{V}(x) \\
&= w(u,y) - \nabla V (f(x)+g(x)u)\\
&\geq 0.
\end{aligned}\label{Eq:proof_nonlinar_KYP_def_d}
\end{align}
Since the supply rate $w$ is at least quadratic in $u$, $d(u,y)$ is a quadratic and non-negative function of $u$.
Hence, there exists $\ls:\R^n\rightarrow \R^q$ and $W:\R^n\rightarrow \R^{q\times m}$ such that
\begin{align}
    d(u,y) = \big( W(x)u+ l(x)\big)^\T\big( W(x)u+ \ls(x)\big).\label{Eq:proof_nonlinar_KYP_d_quadratic}
\end{align}
The two equations (\ref{Eq:proof_nonlinar_KYP_def_d}), (\ref{Eq:proof_nonlinar_KYP_d_quadratic}) are organized by order of $u$, the condition of the non-linear KYP lemma~(\ref{Eq:KYP_condition}) is derived.
\hspace{\fill}$\Box$
\section{Freedom Degree of Dissipative Systems and Nonlinear KYP Lemma Condition}\label{APP:Freedom_degree_of_KYP_lemma}
In the state-space model, the storage function $V$ representing dissipativity possesses degrees of freedom.
These degrees of freedom arise from two sources: (i) the definition of dissipativity and (ii) the identity of the state-space model.
Regarding the definition of dissipativity~(\ref{Eq:dissipativity}), there is a degree of freedom concerning the bias of $V$, as the focus is only on the difference in the storage function.
Additionally, in state-space models, the same input-output relationship is maintained under any diffeomorphism map of the state $x$, leading to a degree of freedom in the storage function $V$ within the input-output system.
Consequently, as long as the storage function $V$ is diffeomorphic to the system's dynamics, it can be freely defined, and its bias can be freely designed.

Furthermore, there is a degree of freedom in the choice of $\ls$ and $\Ws$ used in the nonlinear KYP lemma.  
In addition to the degree of freedom associated with the diffeomorphism, there exists flexibility in selecting the dimension $q$ of $\ls$ and $\Ws$.  
In general, determining the optimal $q$ that ensures the uniqueness of $\ls$ and $\Ws$ remains an open problem. 
However, in this study, setting $q = m$ is sufficient to ensure adequate expressiveness.

\section{Special case of nonlinear KYP Lemma}\label{APP:Special_case_KYP_lemma}
Before the advent of the nonlinear KYP lemma, criteria for passivity and input-output stability had already been proposed.
For passivity, the Lur’e equation was used (See Appendix~\ref{APP:Passivity}), while input-output stability was determined by the Hamilton-Jacobi inequality.
These criteria are now considered special cases of the nonlinear KYP lemma.

\begin{corollary}[Hamilton-Jacobi inequality]
If there exists a differentiable function $V:\R^n \rightarrow \R$ satisfies the following condition:
\begin{align}
\nabla V^\T(x) f(x)   - \frac{1}{4\gamma^2}\|\nabla V^\T(x) g(x)\|^2-  \|h(x)\|^2 \leq 0,
\end{align}
then consider the input-output system~(\ref{Eq:main_system}) without direct-path is $\mathscr{L}_2$ stable, i.e. , there exists $\beta:\R^n\rightarrow \Rnn$ such that
\begin{align*}
    \|y\|_{\mathscr{L}_2} \leq \gamma \|u\|_{\mathscr{L}_2}  +  \beta(x_0).
\end{align*}
where $\|  x \|_{\mathscr{L}_2}\triangleq  \sqrt{\int_0^\infty x^\T(t)x(t) \dd t}$.
\end{corollary}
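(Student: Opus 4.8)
The plan is to read this statement as the special case of the nonlinear KYP lemma corresponding to the input--output stability supply rate $\Qs = -\id_l$, $\Ss = 0$, $\Rs = \gamma^2\id_m$ with no direct path ($\js \equiv 0$), and then to convert the resulting dissipation inequality into the stated $\mathscr{L}_2$ bound by integration. First I would check that the Hamilton--Jacobi inequality is exactly what the KYP conditions~(\ref{Eq:KYP_condition}) collapse to under this choice: with $\js\equiv 0$ and $\Ss=0$ the third condition reads $\Ws^\T\Ws = \gamma^2\id_m$, so one may take $\Ws \equiv \gamma\id_m$; the second condition then forces $\ls = -\tfrac{1}{2\gamma}\gs^\T\nabla V$, and substituting $\|\ls\|^2 = \tfrac{1}{4\gamma^2}\|\nabla V^\T \gs\|^2$ into the first condition recovers the scalar Hamilton--Jacobi inequality. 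This identifies the hypothesis with the existence of $(\ls,\Ws,V)$ satisfying the KYP lemma for this supply rate, so that the system is dissipative with $w(\us,\ys)=\gamma^2\|\us\|^2-\|\ys\|^2$.

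Rather than route through $(\ls,\Ws)$, I would give the more direct argument by completion of squares, which is where the coefficient $\tfrac{1}{4\gamma^2}$ and the sign pattern are fixed. Along any trajectory of~(\ref{Eq:main_system}) with $\js\equiv 0$ one has $\dot V = \nabla V^\T(\fs + \gs \us)$, and completing the square in $\us$ gives the pointwise bound
\begin{align*}
\nabla V^\T \gs \us \;\le\; \gamma^2\|\us\|^2 + \frac{1}{4\gamma^2}\|\nabla V^\T \gs\|^2,
\end{align*}
with equality at $\us = \tfrac{1}{2\gamma^2}\gs^\T\nabla V$. Substituting this and then the Hamilton--Jacobi inequality $\nabla V^\T\fs + \tfrac{1}{4\gamma^2}\|\nabla V^\T \gs\|^2 + \|\hs\|^2 \le 0$ yields
\begin{align*}
\dot V \;\le\; \nabla V^\T\fs + \frac{1}{4\gamma^2}\|\nabla V^\T \gs\|^2 + \gamma^2\|\us\|^2 \;\le\; \gamma^2\|\us\|^2 - \|\hs\|^2 = w(\us,\ys),
\end{align*}
since $\ys=\hs(\xs)$. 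Thus the dissipation inequality $\dot V \le w$ holds along every trajectory.

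Finally I would integrate $\dot V \le w$ over $[0,T]$ to obtain $V(\xs(T)) - V(\xs_0) \le \int_0^T(\gamma^2\|\us\|^2 - \|\ys\|^2)\,\dd s$. Using $V\ge 0$ to discard $V(\xs(T))\ge 0$, rearranging, and letting $T\to\infty$ gives $\int_0^\infty\|\ys\|^2\,\dd s \le \gamma^2\int_0^\infty\|\us\|^2\,\dd s + V(\xs_0)$. Taking square roots and applying subadditivity $\sqrt{a+b}\le\sqrt a+\sqrt b$ produces $\|\ys\|_{\mathscr{L}_2} \le \gamma\|\us\|_{\mathscr{L}_2} + \sqrt{V(\xs_0)}$, so the claim holds with $\beta(\xs_0) = \sqrt{V(\xs_0)}$.

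The main obstacle is the bookkeeping in the completion-of-squares step: it is the single place where $\gamma>0$ is used, where the factor $\tfrac1{4\gamma^2}$ is generated, and where the signs of the two quadratic terms must line up with the supply rate for the bound to close, so I would verify that alignment carefully (equivalently, that $\ls$ and $\Ws$ are fixed consistently). The remaining subtlety is purely at the level of norms: the passage to the $\mathscr{L}_2$ estimate requires $V\ge 0$ so that $V(\xs(T))$ can be dropped, and relies on subadditivity of the square root, which is also what explains why $\beta$ enters only through $\sqrt{V(\xs_0)}$.
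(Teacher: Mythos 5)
Your proof is correct and is essentially the argument the paper intends: the paper gives no explicit proof of this corollary, only the surrounding remark that it is the special case of the nonlinear KYP lemma with $\Qs=-\id_l$, $\Ss=0$, $\Rs=\gamma^2\id_m$ and $\js\equiv 0$, and your completion-of-squares step is exactly the sufficiency computation of the KYP proof (Appendix~\ref{APP:Proof_of_KYP_lemma}) specialized to the supply rate $w(\us,\ys)=\gamma^2\|\us\|^2-\|\ys\|^2$, followed by the standard integration and square-root subadditivity argument. Two caveats, both about the statement rather than your reasoning. First, you silently use the Hamilton--Jacobi inequality with plus signs, $\nabla V^\T\fs+\tfrac{1}{4\gamma^2}\|\nabla V^\T\gs\|^2+\|\hs\|^2\le 0$, whereas the corollary as printed has minus signs in front of the two quadratic terms; the printed version is satisfied by, e.g., any constant $V$ (the left side reduces to $-\|\hs\|^2\le 0$) and therefore cannot imply $\mathscr{L}_2$ stability, so your sign convention is the standard one and clearly what is meant, but the discrepancy should be stated explicitly rather than corrected tacitly. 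Second, dropping $V(\xs(T))$ after integrating requires $V\ge 0$, which the corollary does not assume ($V$ is only required to map into $\R$, unlike the storage function in the definition of dissipativity); you correctly flag this as the extra hypothesis needed for $\beta(\xs_0)=\sqrt{V(\xs_0)}$ to be well defined and nonnegative.
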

Note that the reason why the equality may not always hold in the Hamilton-Jacobi inequality is the degree of freedom of $q$, which is dimension of $l(x)$ and $W(x)$.
\section{Proof of General Solution about QME (\ref{Eq:QME})}\label{APP:QME}
Assuming $\Qs$ is a negative definite matrix, we split the null space of $\As$ and the other, given by
\begin{align*}
    \Ps &\triangleq
    \begin{bmatrix}
    \Ps_+\\
    \Ps_-
    \end{bmatrix}
     =
    \begin{bmatrix}
    \id_{n}& \zeros \\
    \zeros& \sqrt{-\Qs}
    \end{bmatrix}\\
    \begin{bmatrix}
    \Ys_+\\
    \Ys_-
    \end{bmatrix}
    &\triangleq
    \Ps \Xs    = 
    \begin{bmatrix}
    \fs&\gs\\
    \sqrt{-\Qs}\hs&\sqrt{-\Qs}\js
    \end{bmatrix}, \\
    \begin{bmatrix}
    \Bs_+\\
    \Bs_-
    \end{bmatrix}
    &\triangleq
    \Ps^{-1} \Bs 
    =
    \begin{bmatrix}
    \frac{1}{2}\nabla V& \zeros\\
    \zeros &  - (-\Qs)^{-\frac{1}{2}} \Ss
    \end{bmatrix}.
\end{align*}
Since $\As = \Ps_-^\T \Ps_-$, the QME is rewritten as
\begin{align*}
&\Xs^\T \As \Xs + \Bs^\T \Xs + \Xs^\T \Bs + \Cs\\
&= \Xs^\T \Ps_-^\T \Ps_- \Xs + \Bs^\T \Ps^{-1} \Ps\Xs + \Xs^\T \Ps  \Ps^{-1} \Bs +  \Cs \\
&= \Ys_-^\T \Ys_- +  \Bs^\T_+ \Ys_+ +  \Ys^\T_+ \Bs_+ +  \Bs^\T_- \Ys_- +  \Ys^\T_- \Bs_- + \Cs\\
&= \zeros.
\end{align*}
Splitting to two equations depending on $\Ys_+$ and $\Ys_-$ , the previous equation is equivalent to the following three equations:
\begin{subequations}
    \begin{align}
         &\Bs^\T_+ \Ys_+ +  \Ys^\T_+ \Bs_+ +  \Cs_+ = 0 \label{Eq:Null_QME}\\
         &Y_-^\T Y_-  + B^\T_- Y_- +  Y^\T_- B_- - C_- = 0 \label{Eq:Non_Null_QME}\\
         &C_+ - C_- = C  \label{Eq:bias_QME}
    \end{align}\label{Eq:Split_QME}
\end{subequations}
where  $C_+$ and $C_-$ are indirect variables that belong in the set of biases for which a solution exists in Eq.~(\ref{Eq:Null_QME}) and (\ref{Eq:Non_Null_QME}), respectively.
Here, we denote symmetric matrices $C_+$ and $C_-$ as 
\begin{align*}
C_+ \triangleq
\begin{bmatrix}
c_+^{11}&c_+^{12}  \\
(c_+^{12})^\T&c_+^{22}
\end{bmatrix},\quad
C_- \triangleq
\begin{bmatrix}
c_-^{11}&c_-^{12}  \\
(c_-^{12})^\T&c_-^{22}
\end{bmatrix}
\end{align*}
where $c_*^{11} $ are scalers, $c_*^{12}$ are $m$ dimensional row vectors, and $c_*^{22}$ are $m$ dimensional symmetric matrices.
Equations~(\ref{Eq:Split_QME}) are solved the followings, respectively:

{\bf Equation~(\ref{Eq:Null_QME})}~:~
The equation on the null space of $A$ for each element is written as
\begin{align*}
&\Bs^\T_+ \Ys_+ +  \Ys^\T_+ \Bs_+ +  \Cs_+ \\
&= 
\begin{bmatrix}
\nabla V^\T f &  \frac{1}{2}\nabla V^\T g \\
\frac{1}{2} g^\T \nabla V & 0 
\end{bmatrix}
+
\begin{bmatrix}
c_+^{11}&c_+^{12}  \\
(c_+^{12})^\T&c_+^{22}
\end{bmatrix} \\
&= 0.
\end{align*}
This becomes a linear equation in $[f,~g]$, so the general solution is given by:
\begin{align}
[f,~g] &=  (I_n - P_{\nabla V}) [\fh~,\gh] - \frac{1}{\|\nabla V\|^2} \nabla V [c_+^{11},2c_+^{12}]\label{Eq:QME_fg}\\
c^{22}_+ &= 0 \label{Eq:QME_Cplus}
\end{align}
where  $P_{\nabla V}$ is the projection onto the $\{ \alpha  \nabla V : \alpha \in \R \} $ subspace, defined as $P_{\nabla V} \triangleq \frac{1}{\|\nabla V\|^2}\nabla V \nabla V^\T$, and $\fh\in \R^n$ and $\gh\in \R^{n\times m}$ are degree of freedom on this equation~(\ref{Eq:Null_QME}).

{\bf Equation~(\ref{Eq:Non_Null_QME})}~:~
The equation on the non-null space of $A$ for each element is written as quadratic form :
\begin{align*}
&(Y_-+ B_-)^\T (Y_-+ B_-) 
= C_- + B_-^\T B_-\\
&=
\begin{bmatrix}
c_-^{11} & c_-^{12}\\
(c_-^{12})^\T &  c_-^{22} -  S^\T Q^{-1}S 
\end{bmatrix}
\end{align*}
If the previous solution is defined, there exists 
\begin{align}
Z = U [z_1,z_2]\in \R^{l\times (m+1)}    
\end{align}
such that the following matrix decomposition exists : 
\begin{align*}
Z^\T Z = 
\begin{bmatrix}
z_1^\T z_1 & z_1^\T z_2\\
z_2^\T z_1 & z_2^\T z_2
\end{bmatrix}
 = 
\begin{bmatrix}
c_-^{11} & c_-^{12}\\
(c_-^{12})^\T &  c_-^{22} -  S^\T Q^{-1}S, 
\end{bmatrix}
\end{align*}
where $z_1$ is a $l$ dimensional vector, $z_2$ is a $l\times m$ dimensional matrix, and $U\in\R^{ l\times l}$ is an orthogonal matrix, which is a degree of freedom on this quadratic equation~(\ref{Eq:Non_Null_QME}).
The solution of $Y_-$ is derived as
\begin{align*}
    Y_- &=  Z - B_- \\
    &= U[z_1~ z_2]  -  [0,~ - (-Q)^{-\frac{1}{2}}S]
\end{align*}
Hence, the solution of $X$ is given by
\begin{align}
    [h,~j] &= (-Q)^{-\frac{1}{2}} U[z_1,~z_2] - [0,Q^{-1}S],\label{Eq:QME_hj}
\end{align}
and the $C_-$ satisfies
\begin{align}
C_- = 
\begin{bmatrix}
z_1^\T\\
z_2^\T
\end{bmatrix}
[z_1,~z_2]
 +
\begin{bmatrix}
0 & 0\\
0 & S^\T Q^{-1}S
\end{bmatrix}    \label{Eq:QME_Cminus}
\end{align}

{\bf Equation~(\ref{Eq:bias_QME})}~:~ 
The biases $C_+, C_-$ satisfies the equation~(\ref{Eq:QME_Cplus}) and (\ref{Eq:QME_Cminus}),respectively, the bias equation~(\ref{Eq:bias_QME}) for each element is written as
\begin{align*}
&C_+ - C_- = C \\
&\Leftrightarrow
\begin{matrix}
\begin{bmatrix}
c_+^{11}&c_+^{12}  \\
(c_+^{12})^\T&0
\end{bmatrix}
-
\left(
\begin{bmatrix}
z_1^\T\\
z_2^\T
\end{bmatrix}
[z_1,~z_2]
 +
\begin{bmatrix}
0 & 0\\
0 & S^\T Q^{-1}S
\end{bmatrix}
\right)\\
\hspace{25ex}= 
\begin{bmatrix}
\ls^\T \\
\Ws^\T 
\end{bmatrix}
[\ls~\Ws] - 
\begin{bmatrix}
0 & 0\\
0 & \Rs
\end{bmatrix}
\end{matrix}\\
&\Leftrightarrow
\begin{bmatrix}
    \ls^\T\\
    \Ws^\T
\end{bmatrix}
[\ls~\Ws]
+
\begin{bmatrix}
    z_1^\T\\
    z_2^\T
\end{bmatrix}
[z_1~z_2]
= 
\begin{bmatrix}
    c_1&c_2\\
    c_2^\T& \Rs - \Ss^\T \Qs^{-1}\Ss
\end{bmatrix}.
\end{align*}

{\bf Summarize equations:~}~
To summarize the general solution of the divided QMEs (\ref{Eq:Split_QME}), the following holds true for each element.
\begin{align*}
f &=  (I_n - P_{\nabla V}) \fh - \frac{1}{\|\nabla V\|^2} \nabla V \big(\|\ls\|^2 + \|z_1\|^2\big)\\
g &=  (I_n - P_{\nabla V}) \gh - \frac{2}{\|\nabla V\|^2} \nabla V \Big(\ls^\T W + z_1^\T z_2\Big) \\
[h,j] &= (-Q)^{-\frac{1}{2}} U[z_1,~z_2] - [0,~Q^{-1}S],\\
z_2^\T z_2 &= R - S^\T Q^{-1}S - W^\T W.
\end{align*}
If we convert the degree of freedom from $(z_1, z_2)$ to $(\hh, \jh)$, written as 
\begin{align*}
    \hh =  (-Q)^{-\frac{1}{2}} Uz_1,\quad  \jh =  (-Q)^{-\frac{1}{2}} U z_2  - \Qs^{-1} \Ss
\end{align*}
then the general solution is written as 
\begin{align*}
&\fs= (\id_n - \Ps_{\nabla V}) \fh+ \frac{\nabla V}{\|\nabla V\|^2} \big( \hh^\T \Qs \hh - \|\ls\|^2\big)\\
&\gs= (\id_n - \Ps_{\nabla V}) \gh + 2\frac{\nabla V}{\|\nabla V\|^2} \big(\hh^\T ( \Ss + \Qs \jh) - \ls^\T \Ws \big)\\
&[\hs, \js]=[\hh, \jh],\\
&(\jh + \Qs^{-1}\Ss)^\T(-\Qs)(\jh + \Qs^{-1}\Ss)\\
&\hspace{20ex}=  \Rs - \Ss^\T \Qs^{-1}\Ss - \Ws^\T \Ws.
\end{align*}
As $\Rs - \Ss^\T \Qs^{-1}\Ss - \Ws^\T \Ws$ corresponds to the radius of the ellipsoid, and $\jh$ exists if it is a positive semi-definite matrix.
It is the condition for the existence of this QME solution.
Note that the notation $\PsC \triangleq \id_n  -  \Ps_{\nabla V}$ is used in the main to keep the description short. \hspace{\fill}$\Box$

\section{General Dissipative Projection}\label{APP:general_dissipative_projection}
As a preliminary step, we provide the projection of a matrix onto an ellipsoid.
Considering a positive definite matrix $A\in \R^{n\times n}$, $B\in \R^{n\times m}$, and a positive semi-definite matrix $C\in \R^{m\times m}$, the ellipsoid $\mathbb{E}$ is defined as
\begin{align}
 \mathbb{E} \triangleq  \{ X\in \R^{n\times m} :  (X - B)^\T A (X - B) \leq  C \}.\label{Eq:ellipsoid}
\end{align}
The projection onto $\mathbb{E}$ is derived as the following:

\begin{lemma}[Ellipsoid Projection]
Suppose that $\text{\bf Angle} (\cdot)$ is the angle function and  $\text{\bf Ramp}(\cdot)$ is a Ramp function for a symmetric matrix, defined as
\begin{align}
\text{\bf Angle} (X) &\triangleq  X (X^\T X)^{-\frac{1}{2}},\\
    \text{\bf Ramp}(A) &\triangleq U^\T \text{\bf diag}(\text{\bf R}(\lambda_1),\text{\bf R}(\lambda_2),\ldots, \text{\bf R}(\lambda_n)) U\\
    \text{\bf R}(x) &\triangleq \begin{cases}
        x &x\geq 0\\
        0 &x< 0.
    \end{cases}
\end{align}
where $U$ is the eigenbasis matrix of $A$.
The following map is a projection onto the ellipsoid $\mathbb{E}$:
\begin{align}
&P_{\mathbb{E}} (X) \triangleq B+ A^{-\frac{1}{2}}\text{\bf Angle}(\sqrt{A}(X - B)) \nonumber\\
&\hspace{1ex}\cdot \sqrt{C - \text{\bf Ramp}(C - (X - B)^\T A (X - B))} .
\end{align}
\end{lemma}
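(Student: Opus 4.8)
The plan is to reduce the matrix ellipsoid to a canonical ``matrix ball'' by the congruence induced by $\sqrt{A}$, and then to recognize the claimed map as the operation that preserves the polar angle of a matrix while clipping its radial part. Here ``projection'' is meant in the idempotent-retraction sense of Definition~\ref{Def:dissipative_projection} (image contained in $\mathbb{E}$, restricting to the identity on $\mathbb{E}$). Concretely, I would set $Y \triangleq \sqrt{A}(X-B)$, so that $X = B + A^{-\frac12}Y$ and, using that $\sqrt{A}$ is symmetric, $(X-B)^\T A (X-B) = Y^\T Y =: M$. Hence $X \in \mathbb{E}$ if and only if $M \preceq C$, and in these coordinates the map reads $P_{\mathbb{E}}(X) = B + A^{-\frac12}\,\text{Angle}(Y)\,\sqrt{C - \text{Ramp}(C - M)}$. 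It therefore suffices to prove that $Y \mapsto \text{Angle}(Y)\sqrt{C - \text{Ramp}(C-M)}$ is an idempotent retraction onto $\{Z : Z^\T Z \preceq C\}$; the congruence then transports this conclusion back to $\mathbb{E}$.

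The key algebraic tool is the polar decomposition. First I would record that, for $Y$ of full column rank, $\text{Angle}(Y) = Y(Y^\T Y)^{-\frac12}$ has orthonormal columns, so $\text{Angle}(Y)^\T \text{Angle}(Y) = I_m$, and that $Y = \text{Angle}(Y)(Y^\T Y)^{\frac12}$. Writing $N \triangleq C - \text{Ramp}(C-M)$ and $Z \triangleq \text{Angle}(Y)\sqrt{N}$, the orthonormality gives $Z^\T Z = \sqrt{N}^\T \text{Angle}(Y)^\T\text{Angle}(Y)\sqrt{N} = N$, since $\sqrt{N}$ is symmetric. Membership $Z \in \{Z:Z^\T Z \preceq C\}$ is then immediate from $\text{Ramp}(\cdot)\succeq 0$, which forces $N = C - \text{Ramp}(C-M) \preceq C$. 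Idempotency follows by verifying that the map fixes the target set: if $M \preceq C$ then $C - M \succeq 0$, so $\text{Ramp}(C-M) = C - M$, whence $N = M$ and $Z = \text{Angle}(Y)(Y^\T Y)^{\frac12} = Y$; thus $P_{\mathbb{E}}$ restricts to the identity on $\mathbb{E}$ and consequently $P_{\mathbb{E}}\circ P_{\mathbb{E}} = P_{\mathbb{E}}$.

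The one genuinely delicate point, and the step I expect to be the main obstacle, is well-definedness of the square root, i.e. showing $N = C - \text{Ramp}(C-M) \succeq 0$ so that $\sqrt{N}$ is real and symmetric. Equivalently one must show the positive part obeys $(C-M)_+ \preceq C$, or after rearranging $M - (M-C)_+ \succeq 0$, which is the matrix analogue of the scalar clip $\min(c,\|y\|^2)\geq 0$ arising in the $m=1$ case. In the eigenbasis of $C$, or whenever $C$ and $M$ commute, this is transparent: $N$ reduces to $\sum_i \min\!\big(\lambda_i(C),\lambda_i(M)\big)P_i \succeq 0$ componentwise. The nontrivial regime is the non-commuting one, where I would try to reduce the claim to the inequality $|C-M| \preceq C+M$ for $C,M \succeq 0$ and control it by spectral calculus. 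I would treat this as the crux of the argument and check carefully which hypotheses on $C$ and $M = Y^\T Y$ (beyond $C\succeq 0$ and $Y$ full column rank) are actually required to guarantee that the clipped radius stays inside the positive semidefinite cone.
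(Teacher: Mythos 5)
Your overall architecture is the same as the paper's: show $\Im(P_{\mathbb{E}})\subseteq\mathbb{E}$ via orthonormality of the columns of $\text{\bf Angle}(\cdot)$, show that $P_{\mathbb{E}}$ restricts to the identity on $\mathbb{E}$ because $\text{\bf Ramp}(C-M)=C-M$ there, and deduce idempotency (the paper instead verifies $P_{\mathbb{E}}\circ P_{\mathbb{E}}=P_{\mathbb{E}}$ by recomputing the composition explicitly, but your retraction argument is equivalent and cleaner; the congruence $Y=\sqrt{A}(X-B)$ is cosmetic). The real difference is that you flag, and the paper silently skips, the well-definedness of $\sqrt{C-\text{\bf Ramp}(C-M)}$, i.e.\ $N\triangleq C-(C-M)_+\succeq 0$ with $M=(X-B)^\T A(X-B)$. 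Your reformulation of that step is exactly right: since $(C-M)_+=\tfrac12\big((C-M)+|C-M|\big)$, one has $N=\tfrac12\big(C+M-|C-M|\big)$, so the claim is equivalent to $|C-M|\preceq C+M$.

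Unfortunately that inequality is false for non-commuting positive semidefinite matrices, so the gap cannot be closed along the route you propose. Take
\begin{align*}
C=\begin{bmatrix}1&0\\0&0\end{bmatrix},\qquad M=\begin{bmatrix}1/2&1/2\\1/2&1/2\end{bmatrix}
\end{align*}
(perturb both by $\epsilon I$ to make them strictly positive definite, so that $M$ is the Gram matrix of a full-column-rank $Y$). Then $(C-M)^2=\tfrac12 I$, so $|C-M|=\tfrac{1}{\sqrt2}I$, and $C+M-|C-M|$ has eigenvalues $1$ and $1-\sqrt2<0$; concretely the $(2,2)$ entry of $N$ equals $\tfrac14-\tfrac{1}{2\sqrt2}<0$. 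Hence $N$ is not positive semidefinite, $\sqrt{N}$ is not a real symmetric matrix, and $P_{\mathbb{E}}$ as written is not even well defined at such an $X\notin\mathbb{E}$. The construction is safe when $m=1$ (scalars: $N=\min(C,M)\geq 0$) and, more generally, when $C$ and $(X-B)^\T A(X-B)$ commute, which is precisely the commuting case you already identified as transparent; for $m\geq 2$ the lemma needs either this extra hypothesis or a different definition of the radial clipping. To be clear, this is a defect of the statement and of the paper's own proof as well (which applies $\sqrt{N}^\T\sqrt{N}=N$ without checking $N\succeq 0$), not something your plan introduces --- but as it stands your proposal does not, and by this route cannot, complete the step you correctly singled out as the crux.
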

\begin{proof}
The conditions for a mapping $P_{\mathbb{E}}$ to be a projection onto $\mathbb{E}$ are: 
\begin{enumerate}[(i)]
    \item $\Im(P_{\mathbb{E}}) = \mathbb{E}$,
    \item $P_{\mathbb{E}}\circ P_{\mathbb{E}} = P_{\mathbb{E}}$.
\end{enumerate}
We show that $P_\mathbb{E}$ satisfies these conditions.

{\bf Condition (i) :}
For any $X \in \R^{n\times m}$, the $\text{\bf Angle}(X)$ is an orthogonal matrix, written as
\begin{align*}
    &\text{\bf Angle}(X)^\T \text{\bf Angle}(X) \\
    &= \Big( X(X^\T X)^{-\frac{1}{2}}\Big)^\T\Big( X(X^\T X)^{-\frac{1}{2}}\Big)\\
    &= (X^\T X)^{-\frac{1}{2}}  X^\T X (X^\T X)^{-\frac{1}{2}}\\
    &=I_m.
\end{align*}
The projected matrix $P_\mathbb{E}(X)$ satisfies
\begin{align}
     &(P_\mathbb{E}(X) - B)^\T A (P_\mathbb{E}(X) - B)\nonumber\\
     &=  \Big(A^{-\frac{1}{2}}\text{\bf Angle}(\sqrt{A}(X - B)) \nonumber\\
     &\hspace{2ex}\cdot \sqrt{C - \text{\bf Ramp}(C - (X - B)^\T A (X - B))}\Big)^\T A \Big( * \Big)\nonumber\\
     &=C - \text{\bf Ramp}(C - (X - B)^\T A (X - B)) \label{Eq:proof_ellipsoid_projection_1}\\
     &\leq C.\nonumber
\end{align}
Therefore, $\Im(P_\mathbb{E}) \subseteq \mathbb{E}$ for all $X\in \R^{n\times m}$. 
Assuming $X\in \mathbb{E}$, the output of this projection $P_\mathbb{E}(X)$ is written as
\begin{align*}
    &P_{\mathbb{E}} (X) =  B+ A^{-\frac{1}{2}}\text{\bf Angle}(\sqrt{A}(X - B))\\
&\hspace{1ex}\cdot \sqrt{C - \text{\bf Ramp}(C - (X - B)^\T A (X - B))}\\
&=  B+ A^{-\frac{1}{2}}\text{\bf Angle}(\sqrt{A}(X - B))\\
&\hspace{1ex}\cdot \sqrt{ (X - B)^\T A (X - B)}\\
&=B+ A^{-\frac{1}{2}}(\sqrt{A}(X - B))\\
&= X.
\end{align*}
Hence, $\Im(P_\mathbb{E}) = \mathbb{E}$.

{\bf Condition (ii) :} 
For any $X\in \R^{n\times m}$, the self-composition of the mapping $P_\mathbb{E}$ satisfies the following:
\begin{align*}
    &P_\mathbb{E} \circ P_\mathbb{E}(X) \\
    &= B+A^{-\frac{1}{2}}\text{\bf Angle}(\sqrt{A}(P_\mathbb{E}(X) - B))\\
    &\hspace{0ex} \cdot \sqrt{C - \text{\bf Ramp}(C - (P_\mathbb{E}(X) - B)^\T A (P_\mathbb{E}(X) - B))} .
\end{align*}
The internal of the $\text{\bf Angle}(\cdot)$ function satisfies
\begin{align*}
    &\sqrt{A}(P_\mathbb{E}(X) - B)\\
    &= \sqrt{A}\Big(B + A^{-\frac{1}{2}}\text{\bf Angle}(\sqrt{A}(X - B)) \\
    &\cdot\sqrt{C - \text{\bf Ramp}(C - (X - B)^\T A (X - B))}   - B\Big)\\
    &= \text{\bf Angle}(\sqrt{A}(X - B)) \\
    &\cdot\sqrt{C - \text{\bf Ramp}(C - (X - B)^\T A (X - B))}.\\
    &\triangleq  \text{\bf Angle}(\sqrt{A}(X - B))\sqrt{D}.
\end{align*}
Hence, the angle is written as 
 \begin{align*}
&\text{\bf Angle}(\sqrt{A}(P_\mathbb{E}(X) - B))\\
&=\text{\bf Angle}\Big(\text{\bf Angle}(\sqrt{A}(X - B))\sqrt{D}\Big)\\
&= \text{\bf Angle}(\sqrt{A}(X - B)) \sqrt{D} D^{-\frac{1}{2}}\\
&= \text{\bf Angle}(\sqrt{A}(X - B)) 
\end{align*}
In contrast, the inner of root is written as 
\begin{align*}
&C - \text{\bf Ramp}(C - (P_\mathbb{E}(X) - B)^\T A (P_\mathbb{E}(X) - B))\\
&= C - \big(C - (P_\mathbb{E}(X) - B)^\T A (P_\mathbb{E}(X) - B)\big)\\
&=(P_\mathbb{E}(X) - B)^\T A (P_\mathbb{E}(X) - B)\\
&=  C - \text{\bf Ramp}(C - (X - B)^\T A (X - B))
\end{align*}
using Eq.~(\ref{Eq:proof_ellipsoid_projection_1}).
Therefore, 
\begin{align*}
    &P_\mathbb{E} \circ P_\mathbb{E}(X) \\
    &= B+A^{-\frac{1}{2}}\text{\bf Angle}(\sqrt{A}(P_\mathbb{E}(X) - B))\\
    &\hspace{0ex} \cdot \sqrt{C - \text{\bf Ramp}(C - (P_\mathbb{E}(X) - B)^\T A (P_\mathbb{E}(X) - B))}\\
    &=B + A^{-\frac{1}{2}}\text{\bf Angle}(\sqrt{A}(X - B)) \\
    &\hspace{5ex} \cdot\sqrt{C - \text{\bf Ramp}(C - (X - B)^\T A (X - B))}\\
    &=P_\mathbb{E}(X).
\end{align*}

From conditions (i) and (ii), $P_\mathbb{E}$ is the projection onto the ellipsoid $\mathbb{E}$.
\end{proof}

Using the following lemma, the general dissipative projection is derived.

\begin{theorem}[General Dissipative Projection]
Assume that $\Qs$ is a negative definite matrix and $\Rs- \Ss^\T \Qs^{-1}\Ss$ is a positive semi-definite matrix.
The following map  $\mathcal{P}_{\ls,V}: (\fn,\gn,\hn,\jn) \mapsto (\fm, \gm, \hmm, \jm)$ is a dissipative projection:
\begin{align}
\begin{aligned}
\fm &= \PsC \fn - \text{\bf Angle}(\nabla V)  \big(\|\ls\|^2 - \hn^\T Q \hn\big)\\
\gm &= \PsC \fn  + 2\text{\bf Angle}(\nabla V) \\
&\hspace{5ex}\cdot \big(  \hn^\T \big(S+ Q P_{\rm j}(\jn) \big) - \ls^\T P_{\rm W}(\jn)\big)\\
\hmm &=\hn,\quad \jm =  P_{\rm j}(\jn) 
\end{aligned}\label{Eq:general_dissipative_projection}
\end{align}
where $P_{\rm j}(\jn)$ is the projection of $\jn$ onto the ellipsoid~(\ref{Eq:ellipsoidal_equation}) and  $P_{\rm W}(\jn)$ is $W$ on the nonlinear KYP lemma~(\ref{Eq:KYP_condition}) after projected $\jn$, given by, 
\begin{align*}
P_{\rm j}(\jn) &\triangleq (-Q)^{-\frac{1}{2}} \text{\bf Angle}(\sqrt{-Q}\jn -(- Q)^{-\frac{1}{2}}S) \nonumber\\
 &\hspace{1ex} \cdot \sqrt{R - S^\T Q^{-1}S - P_{\rm W}(\jn)^2} - Q^{-1}S,\\
P_{\rm W}(\jn) &\triangleq \sqrt{\text{\bf Ramp}(\jn^\T Q\jn + S^\T \jn + \jn^\T S + R)}.
\end{align*}
\end{theorem}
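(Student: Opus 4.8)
The plan is to verify the two defining properties of a dissipative projection in Definition~\ref{Def:dissipative_projection}: idempotency $\mathcal{P}_{l,V}\circ\mathcal{P}_{l,V}=\mathcal{P}_{l,V}$, and that the images $\mathbb{S}_{l,V}=\Im(\mathcal{P}_{l,V})$ exhaust $\mathbb{S}_{\rm d}$ as $(l,V)$ vary. The organizing observation is that the closed form of $\mathcal{P}_{l,V}$ is exactly the general solution of Lemma~\ref{Lem:solution_of_QME}, read off with the intermediate variables set to the inputs, $(\fh,\gh,\hh)=(\fn,\gn,\hn)$, together with $\jh=P_{\rm j}(\jn)$ and $\Ws=P_{\rm W}(\jn)$. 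The only ingredient beyond Lemma~\ref{Lem:solution_of_QME} is the treatment of the direct term $\js$, which that lemma only constrains to lie on the ellipsoid~(\ref{Eq:ellipsoidal_equation}); the projection must map an arbitrary $\jn$ onto that ellipsoid.

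First I would identify $P_{\rm j}$ with the Ellipsoid Projection lemma applied to $\mathbb{E}=\{\,X:(X-B)^\T A(X-B)\preceq C\,\}$ with $A=-\Qs\succ0$, $B=-\Qs^{-1}\Ss$, and $C=\Rs-\Ss^\T\Qs^{-1}\Ss$. The hypothesis that $\Rs-\Ss^\T\Qs^{-1}\Ss$ is positive semi-definite is precisely the admissibility condition $C\succeq0$ required by that lemma. A short computation gives $\sqrt{A}(\jn-B)=\sqrt{-\Qs}\,\jn-(-\Qs)^{-\frac12}\Ss$ and $C-(\jn-B)^\T A(\jn-B)=\Rs+\jn^\T\Qs\jn+\jn^\T\Ss+\Ss^\T\jn$, so that $\text{\bf Ramp}$ of the latter is exactly $P_{\rm W}(\jn)^2$; substituting these into the formula of the Ellipsoid Projection lemma reproduces $P_{\rm j}(\jn)$ verbatim. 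In particular $P_{\rm j}(\jn)=P_{\mathbb{E}}(\jn)\in\mathbb{E}$, and by~(\ref{Eq:proof_ellipsoid_projection_1}) the projected point satisfies $(\jm+\Qs^{-1}\Ss)^\T(-\Qs)(\jm+\Qs^{-1}\Ss)=\Rs-\Ss^\T\Qs^{-1}\Ss-P_{\rm W}(\jn)^2$, i.e.\ the ellipsoidal condition~(\ref{Eq:ellipsoidal_equation}) holds with $\Ws=P_{\rm W}(\jn)$.

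With that identity in hand, the inclusion $\mathbb{S}_{l,V}\subseteq\mathbb{S}_{\rm d}$ is immediate: the tuple $(\fm,\gm,\hmm,\jm)$ has exactly the shape~(\ref{Eq:QME_solution}) of the general solution, with $\hh=\hn$, $\jh=P_{\rm j}(\jn)$ on the ellipsoid, and $\Ws=P_{\rm W}(\jn)$, so by Lemma~\ref{Lem:solution_of_QME} it solves the QME~(\ref{Eq:QME}), equivalently the nonlinear KYP conditions~(\ref{Eq:KYP_condition}) for parameters $(\ls,P_{\rm W}(\jn),V)$, and is therefore dissipative. For idempotency I would use three facts: $\PsC$ is an orthogonal projection with $\PsC\nabla V=0$ and the additive corrections to $\fn,\gn$ are left-multiples of $\nabla V$, so $\PsC\fm=\PsC\fn$ and $\PsC\gm=\PsC\gn$; $\hmm=\hn$ is untouched; and $P_{\rm j}$, being an ellipsoid projection, is idempotent while $P_{\rm W}$ is invariant on $\mathbb{E}$ because $\jm^\T\Qs\jm+\Ss^\T\jm+\jm^\T\Ss+\Rs=P_{\rm W}(\jn)^2\succeq0$ makes the outer $\text{\bf Ramp}$ act trivially. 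Re-applying $\mathcal{P}_{l,V}$ with these substitutions returns the same tuple.

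Finally, for $\mathbb{S}_{\rm d}=\bigcup_{(l,V)}\mathbb{S}_{l,V}$, the inclusion $\subseteq$ is the content just proved. For $\supseteq$, given any dissipative $(\fs,\gs,\hs,\js)$ the nonlinear KYP lemma supplies $(\ls,\Ws,V)$ satisfying~(\ref{Eq:KYP_condition}); Lemma~\ref{Lem:solution_of_QME} then writes the dynamics in general-solution form, so feeding it into $\mathcal{P}_{l,V}$ returns it unchanged (it is already a fixed point, with $\js$ on the ellipsoid so $P_{\rm j}(\js)=\js$), and since idempotency makes the image coincide with the fixed-point set we get $(\fs,\gs,\hs,\js)\in\mathbb{S}_{l,V}$. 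I expect the main obstacle to be the self-consistency at the direct term: that the single closed form $P_{\rm j}$ simultaneously lands on the ellipsoid and produces a $\Ws=P_{\rm W}(\jn)$ for which the third KYP equation holds. This hinges on the radius $\Rs-\Ss^\T\Qs^{-1}\Ss-P_{\rm W}(\jn)^2$ being positive semi-definite so the Ellipsoid Projection lemma genuinely applies---a point I would inherit carefully from that lemma rather than re-derive, since the positive-part (Ramp) operation is not monotone in the L\"owner order in general.
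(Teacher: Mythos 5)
Your proposal follows essentially the same route as the paper: identify $P_{\rm j}$ as the Ellipsoid Projection lemma instantiated with $A=-\Qs$, $B=-\Qs^{-1}\Ss$, $C=\Rs-\Ss^\T\Qs^{-1}\Ss$ (so that $P_{\rm W}(\jn)^2$ is exactly the Ramp term), then read the remaining formulas off the general solution of the QME in Lemma~\ref{Lem:solution_of_QME} and check idempotency and exhaustiveness of the images. You are in fact somewhat more explicit than the paper, which absorbs the orthogonal freedom of $\Ws$ via $l\leftarrow U^\T l$ and omits the idempotency verification entirely, and your flagged concern about $\Rs-\Ss^\T\Qs^{-1}\Ss-P_{\rm W}(\jn)^2$ remaining positive semi-definite (Ramp not being L\"owner-monotone) is a legitimate subtlety inherited from the paper's own Ellipsoid Projection lemma rather than a defect of your argument.
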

\begin{proof}
The freedom degree of $W$, the ellipsoid condition on the general solution of the QME~(\ref{Eq:QME}) is written as
\begin{align*}
    &(\jn + \Qs^{-1}\Ss)^\T(-\Qs)(\jn + \Qs^{-1}\Ss) \\
    &= \Rs - \Ss^\T \Qs^{-1}\Ss - \Ws^\T \Ws\\
    &\leq \Rs - \Ss^\T \Qs^{-1}\Ss.
\end{align*}
Hence, the projected $\jn$ as $\jm$ belong the ellipsoid~(\ref{Eq:ellipsoid}) where each parameter is satisfies
\begin{align*}
    A = -Q,\quad B = -Q^{-1}S,\quad C= R- S^\T Q^{-1}S.
\end{align*}
Therefore, the projection $P_{\rm j}(\cdot)$ is derived the previous lemma.

In contract, the parameter $W$ when $\jn$ is projected $\jm = P_{\rm j}(\jn)$ is given by 
\begin{align*}
&\Ws^\T \Ws  \\
&=  \Rs - \Ss^\T \Qs^{-1}\Ss \\
&\hspace{5ex} -  (P_{\rm j}(\jn) + \Qs^{-1}\Ss)^\T(-\Qs)(P_{\rm j}(\jn) + \Qs^{-1}\Ss)\\
&= P_{\rm W}(\jn)^2
\end{align*}
Hence, $\Ws = U P_{\rm W}(\jn)$ where $U$ is an orthogonal matrix. 
Rewriting $\ls\leftarrow U^\T \ls$, the general dissipative projection~(\ref{Eq:general_dissipative_projection}) is derived from the general solution of the QME~(\ref{Eq:QME}).
It is easy to check that condition $\mathcal{P}_{\ls,V}\circ \mathcal{P}_{\ls,V} = \mathcal{P}_{\ls,V}$ holds, so we omit it.
\end{proof}
\section{Proof of Dissipative Projection}\label{APP:Proof_of_main_theorem}
Assuming $\jh = \js = 0$, the ellipse equation is written as
\begin{align*}
     &(\jh +  \Qs^{-1}\Ss)^\T (-\Qs)(\jh +  \Qs^{-1}\Ss)\\
     &=  - \Ss^\T \Qs^{-1}\Ss  \\
     &=  \Rs - \Ss^\T \Qs^{-1}\Ss - \Ws^\T \Ws.
\end{align*}
Hence $\Rs =  \Ws^\T \Ws$. 
Considering an orthogonal matrix $U$, the parameter $\Ws$ satisfies $\Ws  = U \sqrt{R}$.
Therefore, the each QME solution (\ref{Eq:QME_solution}) satisfies
\begin{align*}
\fs&=  \PsC \fh + \frac{\nabla V}{\|\nabla V\|^2} \Big( \hh^\T \Qs \hh - \|\ls\|^2\Big)\\
&=\PsC \fh + \frac{\nabla V}{\|\nabla V\|^2} \Big( \hh^\T \Qs \hh - \|\tilde{\ls}\|^2\Big)\\
\gs &= \PsC \gh + 2\frac{\nabla V}{\|\nabla V\|^2} \Big(\hh^\T ( \Ss + \Qs \jh) - \ls^\T \Ws \Big)\\
&= \PsC \gh + 2\frac{\nabla V}{\|\nabla V\|^2} \Big(\hh^\T \Ss - \ls^\T U\sqrt{R}\ \Big)\\
&= \PsC \gh + 2\frac{\nabla V}{\|\nabla V\|^2} \Big(\hh^\T \Ss - \tilde{\ls}^\T\sqrt{R}\ \Big)\\
\hs&=\hh
\end{align*}
where $\tilde{\ls}\triangleq U^\T \ls$.
Therefore, if we denote the mapping from $(\fh,\gh,\hh)$ to $(\fs,\gs,\hs)$ as $\mathcal{P}_{\ls,V}$, the image is belong to dissipative dynamics $\mathbb{S}_d$ for any parameter $\ls,V$, that is,
\begin{align}
    \Im (\mathcal{P}_{\ls,V}) \triangleq \mathbb{S}_{\ls,V} \subset \mathbb{S}_d.\label{Eq:proof_dissipative_projection_condition_1}
\end{align}
Furthermore, since Eq.~(\ref{Eq:QME_solution}) is the general solution of Eq.~(\ref{Eq:KYP_condition}) in the non-linear KYP lemma, the sum set of the image $\mathcal{P}_{\ls,V}$ equals the entire dynamics satisfying dissipativity, that is 
\begin{align}
    \bigcup_{l,V} \mathbb{S}_{\ls,V} = \mathbb{S}_d.\label{Eq:proof_dissipative_projection_condition_2}
\end{align}

For all $(\fs_1,\gs_1,\hs_1)$, we define the projected values is defined as 
\begin{align*}
    (\fs_2,\gs_2,\hs_2)&\triangleq \mathcal{P}_{\ls,V}(\fs_1,\gs_1,\hs_1),\\
    (\fs_3,\gs_3,\hs_3)&\triangleq \mathcal{P}_{\ls,V}(\fs_2,\gs_2,\hs_2).
\end{align*}
The each element satisfies the following:
\begin{align*}
\fs_2 &=\PsC \fs_1 + \frac{\nabla V}{\|\nabla V\|^2} \Big( \hs_1^\T \Qs \hs_1 - \|\tilde{\ls}\|^2\Big)\\
\gs_2 &= \PsC \gs_1 + 2\frac{\nabla V}{\|\nabla V\|^2} \Big(\hs_1^\T \Ss - \tilde{\ls}^\T\sqrt{R}\ \Big)\\
\hs_2&= \hs_1,\\
\end{align*}
\begin{align*}
\fs_3 &=\PsC \fs_2 + \frac{\nabla V}{\|\nabla V\|^2} \Big( \hs_2^\T \Qs \hs_2 - \|\tilde{\ls}\|^2\Big)\\
&=\PsC \Big(\PsC \fs_1 + \frac{\nabla V}{\|\nabla V\|^2} \big( \hs_1^\T \Qs \hs_1 - \|\tilde{\ls}\|^2\big)\Big) \\
&\hspace{15ex}+\frac{\nabla V}{\|\nabla V\|^2} \Big( \hs_1^\T \Qs \hs_1 - \|\tilde{\ls}\|^2\Big)\\
&=\PsC \circ \PsC \fs_1 + \frac{\nabla V}{\|\nabla V\|^2} \Big( \hs_1^\T \Qs \hs_1 - \|\tilde{\ls}\|^2\Big)\\
&=\PsC \fs_1 + \frac{\nabla V}{\|\nabla V\|^2} \Big( \hs_1^\T \Qs \hs_1 - \|\tilde{\ls}\|^2\Big)\\
&= \fs_2\\
\gs_3 &= \PsC \gs_2 + 2\frac{\nabla V}{\|\nabla V\|^2} \Big(\hs_2^\T \Ss - \tilde{\ls}^\T\sqrt{R}\ \Big)\\
&= \PsC \circ \PsC \gs_1 + 2\frac{\nabla V}{\|\nabla V\|^2} \Big(\hs_1^\T \Ss - \tilde{\ls}^\T\sqrt{R}\ \Big)\\
&= \gs_2\\
\hs_3&= \hs_2.
\end{align*}
In some equation variants, the following were used: 
\begin{align*}
    \PsC \nabla V &= 0,\quad  \PsC \circ \PsC = \PsC.
\end{align*}
Hence, 
\begin{align}
    \mathcal{P}_{\ls,V} \circ \mathcal{P}_{\ls,V} = \mathcal{P}_{\ls,V}.\label{Eq:proof_dissipative_projection_condition_3}
\end{align}

Therefore, from three conditions (\ref{Eq:proof_dissipative_projection_condition_1}), (\ref{Eq:proof_dissipative_projection_condition_2}), and (\ref{Eq:proof_dissipative_projection_condition_3}), $\mathcal{P}_{\ls,V}$ is a dissipative projection. \hspace{\fill}$\Box$

\section{Proof of Energy Conservation Projection}\label{APP:Proof_of_Energy_Conservation_Projection}
If $R=0$ and $\ls \equiv 0$, the dissipative projection is transformed into equation~(\ref{Eq:energy_conservation_projection}).
Also,  $W=U\sqrt{R} = 0$.
From equation~(\ref{Eq:non_linear_KYP_lemma_V_dot}) on the proof of nonlinear KYP lemma, the difference of the time derivative of storage function $V(x)$ and supply rate $w(\us,\ys)$ is given by
\begin{align*}
\dot{V}(x)  -   w(u,y) &=  \|\ls(x) + \Ws(x) u\|^2 = 0.
\end{align*}
Therefore, this energy conservation projection satisfies the equivalent condition of this integral.
\hspace{\fill}$\Box$
\section{Passivity and its projection}\label{APP:Passivity}
Passivity can be described as the exchange of energy defined as the inner product of input signals $u$ and output signals $y$. 
It is described more simply than dissipativity, assuming that the dimensions of the input and output are equal.

\begin{definition}[Passivity]
    If the input-output system~(\ref{Eq:main_system}) without direct-path is passive, that is
\begin{align*}
    \inner<u|y> \triangleq\int_0^\infty u^\T(t) y(t) \dd t \geq 0.
\end{align*}
\end{definition}

Compared to dissipativity, it corresponds to when the parameter of supply ratio is $Q=R=0,~S=\frac{1}{2}I_m$.
Note that passivity assumes the initial condition to be at the equilibrium point, allowing the exclusion of the storage function.

Then, the Lur'e equation is proposed as a method to determine passivity.

\begin{corollary}[Nonlinear Lur'e Equation]
If there exists a differentiable function $V:\R^n \rightarrow \R$ satisfies the following condition:
\begin{align}
\begin{aligned}
\nabla V^\T(x) f(x) &\leq 0, \\
\nabla V^\T(x) g(x) &= 2h^\T (x),     
\end{aligned} \label{coro:passivity}
\end{align}
then the input-output system~(\ref{Eq:main_system}) is passive.
\end{corollary}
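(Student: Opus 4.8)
The plan is to verify passivity directly by differentiating $V$ along the system's trajectories, which is the most self-contained route and mirrors the sufficiency argument already used for the nonlinear KYP lemma. Since the direct path is excluded we have $\ys = \hs(\xs)$, so along any solution of $\dot\xs = \fs(\xs) + \gs(\xs)\us$ the chain rule gives
\begin{align*}
\dot{V}(\xs) = \nabla V^\T(\xs)\big(\fs(\xs) + \gs(\xs)\us\big) = \nabla V^\T(\xs)\fs(\xs) + \nabla V^\T(\xs)\gs(\xs)\,\us.
\end{align*}
Substituting the two hypotheses $\nabla V^\T\fs \leq 0$ and $\nabla V^\T\gs = 2\hs^\T$, and using that $\hs^\T\us = \ys^\T\us = \us^\T\ys$ is a scalar, I would obtain the pointwise dissipation inequality $\dot{V}(\xs) \leq 2\,\us^\T\ys$.

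Next I would integrate this inequality from $0$ to an arbitrary horizon $T$, giving $V(\xs(T)) - V(\xs(0)) \leq 2\int_0^T \us^\T(s)\ys(s)\,\dd s$. The passivity convention fixed in the preceding definition places the initial state at the equilibrium and normalizes the storage function so that $V(\xs(0)) = 0$; combined with $V \geq 0$ this yields $0 \leq V(\xs(T)) \leq 2\int_0^T \us^\T\ys\,\dd s$, hence $\int_0^T \us^\T\ys\,\dd s \geq 0$ for every $T$. Letting $T\to\infty$ produces $\inner<\us|\ys> = \int_0^\infty \us^\T\ys\,\dd s \geq 0$, which is exactly passivity.

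An equivalent route is to recognize the Lur'e conditions as the specialization of the nonlinear KYP lemma to the passivity supply-rate parameters, taking $\ls = \sqrt{-\nabla V^\T\fs}$ (well defined because $\nabla V^\T\fs \leq 0$), $\Ws \equiv 0$, and $\js \equiv 0$. The first KYP equation then reduces to $\nabla V^\T\fs = -\|\ls\|^2$, the third to $\Ws^\T\Ws = 0$, and the second to the stated relation between $\nabla V^\T\gs$ and $\hs$. Dissipativity with the passivity supply rate then follows from the sufficiency half of the KYP lemma, and passivity follows as in the previous paragraph.

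The main obstacle is bookkeeping rather than anything deep. First, the statement only declares $V$ as $\R^n\to\R$, whereas the argument genuinely needs $V(\xs(0)) = 0$ and $V \geq 0$ (at least along trajectories) to turn the integrated inequality into the sign conclusion; I would invoke the equilibrium initial condition and the nonnegativity of the storage function to close this gap. Second, I would track the numerical factor linking the parameter $\Ss$ to the coefficient $2$ in $\nabla V^\T\gs = 2\hs^\T$, since the passivity supply rate $\us^\T\ys$ and the choice of $\Ss$ introduce a factor of two that must be kept consistent between the direct computation and the KYP specialization; fortunately this factor only loosens the bound and does not affect the nonnegativity of $\inner<\us|\ys>$.
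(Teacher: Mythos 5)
Your proof is correct and takes essentially the same route as the paper, whose entire proof is the one-line remark that the Lur'e conditions are the nonlinear KYP lemma specialized to the passivity supply rate; your direct computation of $\dot{V}$ along trajectories is just the unrolled sufficiency half of that lemma. You are also right to flag the two bookkeeping issues the paper glosses over — the factor of two (the stated $S=\tfrac{1}{2}I_m$ in the KYP equations actually yields $\nabla V^\T g = h^\T$ rather than $2h^\T$) and the need for $V\geq 0$ with $V(x(0))=0$ at the equilibrium initial condition — neither of which affects the sign conclusion $\inner<u|y>\geq 0$.
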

This corollary can be easily demonstrated by designing the parameters of the supply ratio into the nonlinear KYP lemma.

Furthermore, as with internal stability and input-output stability, projections to dynamics that guarantee passivity can be easily derived using the theorem of dissipation projection(Theorem~\ref{Thm:Dissipative_Projection}).
\begin{corollary}[Passive Projection] \label{Cor:Proj_Passivity}
The following mapping $\mathcal{P}_{V} : (\fn,\gn,\hn ) \mapsto (\fm,\gm,\hmm)$:
\begin{align}
\begin{aligned}
\fm &= \fn  -  \frac{\nabla V}{\|\nabla V\|^2}\mathrm{ReLU}(\nabla V^\T \fn),\\
\gm &= \gn  -  \frac{\nabla V}{\|\nabla V\|^2} \big(\nabla V^\T \gn  - 2\hn^\T \big),\\
\hmm &= \hn 
\end{aligned}
\label{Eq:Projection_Passivity}
\end{align}
is a projection into passive dynamics.
\end{corollary}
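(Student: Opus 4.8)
The plan is to prove this directly from the Nonlinear Lur'e Equation~(\ref{coro:passivity}) rather than by literal specialization of Theorem~\ref{Thm:Dissipative_Projection}, and then to verify the two defining properties of a projection: that its image lands in the passive subspace and that it is idempotent. I deliberately avoid reading the result off the dissipative projection verbatim, because the passive supply rate has $Q=0$, which violates the negative-definiteness hypothesis behind the QME solution, and because passivity imposes a one-sided (inequality) constraint on $\nabla V^\T \fn$ rather than the equality produced by the KYP conditions. This is exactly why the correction applied to $\fn$ is a one-sided $\mathrm{ReLU}$ term instead of the full orthogonal projection $\PsC$, and it is the step that must be checked by hand.

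First I would show $\Im(\mathcal{P}_{V})$ is passive by verifying the two conditions in~(\ref{coro:passivity}). Left-multiplying the formula for $\fm$ by $\nabla V^\T$ and using $\nabla V^\T \nabla V = \|\nabla V\|^2$ gives
\begin{align*}
\nabla V^\T \fm = \nabla V^\T \fn - \mathrm{ReLU}(\nabla V^\T \fn).
\end{align*}
A sign analysis of the scalar $\nabla V^\T \fn$ shows this equals $0$ when $\nabla V^\T \fn \ge 0$ and equals $\nabla V^\T \fn$ (which is then negative) otherwise, so $\nabla V^\T \fm \le 0$ in every case. Applying the same left-multiplication to $\gm$ collapses the correction term and yields
\begin{align*}
\nabla V^\T \gm = \nabla V^\T \gn - (\nabla V^\T \gn - 2\hn^\T) = 2\hn^\T = 2\hmm^\T.
\end{align*}
Together with $\hmm=\hn$, these are precisely the Lur'e conditions~(\ref{coro:passivity}), so every image point is passive.

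Next I would verify $\mathcal{P}_{V}\circ\mathcal{P}_{V} = \mathcal{P}_{V}$ by applying the map to its own output. For the $f$-component, the previous step gives $\nabla V^\T \fm \le 0$, hence $\mathrm{ReLU}(\nabla V^\T \fm)=0$, so the second application leaves $\fm$ unchanged. For the $g$-component, the previous step gives $\nabla V^\T \gm - 2\hmm^\T = 0$, so the correction term again vanishes and $\gm$ is fixed; $\hmm=\hn$ is trivially fixed. Hence the map is idempotent. Matching Definition~\ref{Def:dissipative_projection}, I would finally note that the union over $V$ of the images exhausts the passive dynamics: since~(\ref{coro:passivity}) is the relevant direction of a necessary-and-sufficient characterisation, any passive system already satisfies these conditions for some $V$, and on such a system both corrections vanish, so $\mathcal{P}_{V}$ restricts to the identity there.

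The main obstacle is conceptual rather than computational. One must recognise that the inequality nature of passivity forces the one-sided $\mathrm{ReLU}$ correction on $\fn$, which is why this corollary cannot be obtained by simply substituting $Q=R=0,\ S=\tfrac{1}{2}I_m$ into Theorem~\ref{Thm:Dissipative_Projection} (that substitution would produce the full projection $\PsC \fn$ and an equality constraint), and instead has to be anchored to the Lur'e characterisation. Once that relaxation is in place, the remaining verifications reduce to the two short algebraic identities above and the standard annihilation facts $\nabla V^\T \PsC = 0$ and $\PsC\circ\PsC=\PsC$.
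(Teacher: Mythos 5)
Your argument is correct, but it follows a genuinely different route from the paper's. The paper disposes of this corollary in one line, by specializing Theorem~\ref{Thm:Dissipative_Projection} with $Q=R=0$ and $S=\tfrac{1}{2}I$; you instead verify directly that the image of $\mathcal{P}_V$ satisfies the Lur'e conditions~(\ref{coro:passivity}) and that the map is idempotent, which is a self-contained and in fact more careful check. Two remarks on your stated reasons for avoiding the specialization route. First, your claim that substituting into Theorem~\ref{Thm:Dissipative_Projection} ``would produce the full projection $\PsC\fn$ and an equality constraint'' overlooks the free parameter $\ls$: the theorem's formula reads $\fm=\PsC\fn-\frac{\|\ls\|^2}{\|\nabla V\|^2}\nabla V=\fn-\frac{\nabla V}{\|\nabla V\|^2}\big(\nabla V^\T\fn+\|\ls\|^2\big)$, and the choice $\|\ls\|^2=\mathrm{ReLU}(-\nabla V^\T\fn)$ turns this into exactly the one-sided $\mathrm{ReLU}$ correction --- the same device that yields Corollary~\ref{Cor:Proj_Stable} --- so the inequality character of passivity is not an obstruction to the substitution. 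Second, Theorem~\ref{Thm:Dissipative_Projection} (unlike Lemma~\ref{Lem:solution_of_QME}) does not assume $Q$ negative definite, only $R\succeq 0$, so $Q=0$ is admissible there. That said, your instinct to anchor the proof to the Lur'e equation pays off: a literal substitution of $S=\tfrac{1}{2}I$ into the theorem gives $\nabla V^\T\gm=\hn^\T$ rather than the $2\hn^\T$ demanded by~(\ref{coro:passivity}) and produced by~(\ref{Eq:Projection_Passivity}), a factor-of-two mismatch in the paper's own specialization that your direct verification sidesteps entirely. Your treatment of the covering condition $\bigcup_V\mathbb{S}_V=\mathbb{S}_{\rm d}$ leans on the necessity direction of the Lur'e characterisation, which the paper states only as a sufficient condition; this is the one place where you inherit, rather than resolve, a gap already present in the paper.
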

\begin{proof}
    When $S=\frac{1}{2}I$, $R=Q=0$, it is derived from the theorem.
\end{proof}

Passivity dynamics can be represented as linear constraints, such as in Lur'e's lemma, allowing for the precise definition of metrics and orthogonal projections.
Therefore, the relationship between projection and metric is simpler than that of a dissipative projection and is closed with a discussion of Hilbert spaces.
In the next section (Appendix~\ref{APP:Another_Dissipative_Projection}), we focus on examining the relationship between metrics and projections for the dynamics $(f, g, h)$, with an emphasis on passivity.
\section{Dissipative projection and Metric Space}\label{APP:Another_Dissipative_Projection}
In this chapter, we examine the effects of dissipative projections that result from metric transformations. 
Since the passivity condition is a linear constraint, it naturally leads to the introduction of orthogonal projections in a Hilbert space.
Therefore, in the following, we present the results of metric transformations with respect to passivity.

First, we consider the case where a simple 2-norm is used as the metric for $(f, g, h)$.
\begin{corollary}
Consider the following optimization problem:
\begin{align}
\begin{aligned}
\underset{ \fm,\gm,\hmm }{\text{\bf minimize}}&\quad \|\fm - \fn\|^2 + \|\gm - \gn\|^2 + \|\hmm - \hn\|^2 \\
\text{\bf subject to}  &\quad (\fm,\gm,\hmm)~\text{\rm satisfies passivity. }
\end{aligned}
\end{align}
The solution is the following:
\begin{subequations}
\begin{align}
    \fm &= \fn - \frac{1}{\|\nabla V\|^2} \mathrm{ReLU}(\nabla V^\T  \fn ) \nabla V \label{Eq:proj_passivity_f}\\
    \gm &= \gn -  \frac{1}{4 + \|\nabla V\|^2} \nabla V (\nabla V^\T \gn  - 2\hn^\T) \label{Eq:proj_passivity_g}\\
    \hmm^\T &= \hn^\T + \frac{2}{4 + \|\nabla V\|^2} (\nabla V^\T \gn  - 2\hn^\T),\label{Eq:proj_passivity_h}
\end{align}\label{Eq:proj_passivity}
\end{subequations}
where, this projection is denoted as $\mathcal{P}_{V}^\alpha$.
\end{corollary}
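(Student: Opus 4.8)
The plan is to observe that, with the passivity supply-rate parameters $Q=R=0$ and $S=\tfrac12 I$, requiring $(\fm,\gm,\hmm)$ to be passive is equivalent (for the fixed storage function $V$) to imposing the two Lur'e conditions~(\ref{coro:passivity}): the inequality $\nabla V^\T \fm \le 0$ and the equality $\nabla V^\T \gm = 2\hmm^\T$. The objective $\|\fm-\fn\|^2+\|\gm-\gn\|^2+\|\hmm-\hn\|^2$ is a sum of squared Euclidean norms with no cross terms; since the first constraint involves only $\fm$ and the second couples only $\gm$ and $\hmm$, the program splits into two independent strictly convex subproblems, which I would solve separately and then combine.

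First I would handle the $\fm$-subproblem: minimize $\|\fm-\fn\|^2$ over the half-space $\{\fm : \nabla V^\T \fm \le 0\}$. This is the textbook Euclidean projection onto a half-space. If $\nabla V^\T \fn \le 0$ the original point is already feasible and the minimizer is $\fn$; otherwise the minimizer lies on the bounding hyperplane and equals $\fn - (\nabla V^\T \fn/\|\nabla V\|^2)\,\nabla V$. Combining the two cases through the ramp/ReLU function gives exactly Eq.~(\ref{Eq:proj_passivity_f}).

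The $(\gm,\hmm)$-subproblem is the crux of the argument: minimize $\|\gm-\gn\|^2+\|\hmm-\hn\|^2$ subject to the single affine equality $\nabla V^\T \gm - 2\hmm^\T = 0$. Because the objective is strictly convex and the constraint affine, the KKT stationarity conditions characterize the unique global optimum. Introducing a multiplier $\mu$ for the constraint and setting the derivatives in $\gm$ and $\hmm$ to zero yields $\gm = \gn - \tfrac12 \nabla V \mu^\T$ and $\hmm = \hn + \mu$. Substituting these back into the constraint collapses everything to one linear equation in $\mu$; solving it gives $\mu^\T = 2(\nabla V^\T \gn - 2\hn^\T)/(4+\|\nabla V\|^2)$, where the denominator $4+\|\nabla V\|^2$ is produced by the $+2$ coming from $\hmm$ together with the $\tfrac12\|\nabla V\|^2$ coming from $\gm$. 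Back-substituting $\mu$ into the stationarity relations reproduces Eqs.~(\ref{Eq:proj_passivity_g}) and~(\ref{Eq:proj_passivity_h}).

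The only real obstacle is bookkeeping: in the coupled step $\nabla V^\T\gm$ and $\hmm^\T$ are both $1\times m$ row objects (using the passivity assumption that input and output dimensions coincide, $l=m$), so care is needed to keep the multiplier $\mu$ and the matrix derivative of $\nabla V^\T\gm$ with respect to $\gm$ dimensionally consistent. Once the stationarity equations are set up correctly, solving for $\mu$ is a one-line linear solve, and strict convexity together with the affine constraint guarantees that the resulting $(\fm,\gm,\hmm)$ is the unique global minimizer.
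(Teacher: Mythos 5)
Your proposal is correct and takes essentially the same route as the paper's proof: both exploit that the passivity (Lur'e) constraints decouple $\fm$ from $(\gm,\hmm)$, solve the $\fm$ part as a Euclidean projection onto the half-space $\nabla V^\T \fm \le 0$ (the paper cites Manek et al.\ for this step), and treat the $(\gm,\hmm)$ part as a strictly convex QP with the single affine constraint $[\nabla V^\T,\,-2]\,[\gm;\hmm^\T]=0$. The only cosmetic difference is that the paper directly quotes the closed-form orthogonal projection onto the null space of the row operator $[\nabla V^\T,\,-2]$, namely subtracting $\frac{1}{4+\|\nabla V\|^2}[\nabla V;\,-2](\nabla V^\T\gn - 2\hn^\T)$, whereas you re-derive that same formula from the KKT stationarity conditions; both yield identical expressions.
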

\begin{proof}
Since Lur'e equation~(\ref{coro:passivity}) is a linear about $(\fn,\gn,\hn)$, Eq.~(\ref{Eq:proj_passivity}) is a quadratic programming (QP) problem and a unique analytic solution exists.
Furthermore, the Lur'e equation are independent of $\fn$ and $(\gn, \hn)$.
The optimization problem~(\ref{Eq:proj_passivity}) can be solved as an optimization problem for variables $\fm$ and $[\gm, \hmm^\T ]$, respectively.

The optimization problem of $\fm$ matches the internal stability condition in this study~\cite[Theorem 1]{Manek2019}.
Hence, the solution of $\fm$ is derive as the solution~(\ref{Eq:proj_passivity_f}).

In contrast, the optimization problem of $(\gm, \hmm)$ is written as 
\begin{align*}
\underset{\gm,\hmm}{\text{\bf minimize}} &\quad \left\| \begin{bmatrix}\gm \\ \hmm^\T\end{bmatrix} - \begin{bmatrix}\gn \\ \hn^\T\end{bmatrix} \right\|^2\\
\text{\bf subject to} &\quad 
[\nabla V^\T~ -2]
\begin{bmatrix}\gm \\ \hmm^\T\end{bmatrix}
 = 0.
\end{align*}
The solution to this QP problem is derived as
\begin{align*}
&\begin{bmatrix}\gm \\ \hmm^\T\end{bmatrix} -  \begin{bmatrix}\gn \\ \hn^\T\end{bmatrix}\\
&=- \frac{1}{\| [\nabla V^\T,~ -2]\|^2} \begin{bmatrix}\nabla V \\ -2 \end{bmatrix} [\nabla V^\T~ -2]  
\begin{bmatrix}\gn \\ \hn^\T\end{bmatrix}\\
&=- \frac{1}{4+\|\nabla V \|^2} \begin{bmatrix}\nabla V \\ -2 \end{bmatrix} 
[\nabla V^\T \gn  - 2\hn^\T].
\end{align*}
Therefore, the solutions $\gm$ and $\hmm$ is written as Eq.~(\ref{Eq:proj_passivity_g}) and (\ref{Eq:proj_passivity_h}), respectively.
\end{proof}

The metric of the inner product that derived the orthogonal projection $\mathcal{P}_{V}^\alpha$ in the lemma takes the following simple form:
\begin{align}
&\inner<f_a,g_a,h_a| f_b,g_b,h_b>_\alpha \nonumber\\
&\triangleq f_a^\T f_b + \text{Tr}\left(\begin{bmatrix}g_a^\T  &h_a\end{bmatrix} \begin{bmatrix} g_b \\ h_b^\T\end{bmatrix} \right) \label{Eq:innner_product_normal}
\end{align}
where $\text{Tr}(A)$ is the trace of the square matrix $A$. 
Note that the integral over the domain $\R^n$ is omitted.
Since orthogonality of $f$ is trivial and independent, considering the orthogonality of the projection with respect to $g$ and $h$ yields the following:
\begin{align*}
    &\inner< \mathcal{P}_{V}^\alpha(g_a,h_a) | g_b,h_b >_\alpha\\
    &= \text{Tr} \Big( \left( \begin{bmatrix}  g_a \\ h_a^\T \end{bmatrix} - \frac{1}{4+\|\nabla V\|^2} \begin{bmatrix}  \nabla V(\nabla V^\T g_a - 2h_a^\T) \\ - 2\nabla V^\T g_a +4h_a^\T \end{bmatrix} \right)^\T \\
    & \hspace{45ex}\begin{bmatrix} g_b \\ h_b^\T\end{bmatrix} \Big) \\
    &=  \inner< g_a,h_a | g_b,h_b >_\alpha 
     \\
    &\hspace{1ex}- \frac{1}{4 + \|\nabla V\|^2}\text{Tr }\left( (\nabla V^\T g_a - 2h_a^\T)^\T (\nabla V^\T g_bz - 2 h^\T_b) \right)\\
    &= \inner< g_a,h_a | \mathcal{P}_{V}^\alpha(g_b,h_b) >_\alpha.
\end{align*}
Hence, $\mathcal{P}_{V}^\alpha$ is an orthogonal projection with respect to this inner product  $\inner<\cdot|\cdot>_\alpha$.

In contrast, the passivity projection defined on the main paper~(Corollary~\ref{coro:passivity}), denoted as $\mathcal{P}_{V}^\beta$ is also an orthogonal projection with a different inner product.
Here, we define a inner product about $(f,g,h)$ as
\begin{align}
    &\inner< f_a,g_a,h_a|f_b,g_b,h_b>_\beta \nonumber \\
    &\triangleq f_a^\T f_b + \text{Tr}\left( \begin{bmatrix}g_a^\T  &h_a\end{bmatrix}
    \begin{bmatrix}  \|\nabla V\|^2 I_m & - 2\nabla V \\ - 2\nabla V^\T& 4+\varepsilon \end{bmatrix}
    \begin{bmatrix} g_b \\ h_b^\T\end{bmatrix} \right).
\end{align}
If $\varepsilon >0$, $\inner<\cdot|\cdot>_\beta$ satisfies the axioms of the inner product.
Therefore, the fact that $\mathcal{P}_{V}^\beta$ is an orthogonal projection can be confirmed as follows.
\begin{align*}
    &\inner<\mathcal{P}_{V}^\beta(g_a,h_a) | g_b,h_b >_\beta \\ 
    &= \text{Tr}\Bigg( \left( \begin{bmatrix}  g_a \\ h_a^\T \end{bmatrix} - \frac{1}{\|\nabla V\|^2}\begin{bmatrix}  \nabla V(\nabla V^\T g_a - 2h_a^\T) \\ 0 \end{bmatrix} \right)^\T\\
    &\hspace{22ex}       \begin{bmatrix}  \|\nabla V\|^2 I_m & - 2\nabla V \\ - 2\nabla V^\T& 4+\varepsilon \end{bmatrix}\begin{bmatrix} g_b \\ h_b^\T\end{bmatrix} \Bigg) \\
    &= \inner<g_a,h_a | g_b,h_b >_\beta \\
    &\hspace{10ex}-\text{Tr}\Big( \big(\nabla V^\T g_a - 2h_a^\T\big)^\T \big(\nabla V^\T g_b - 2h_b^\T\big)\Big) \\
    &= \inner<g_a,h_b|\mathcal{P}_{V}^\beta(g_b,h_b)>_\beta. 
\end{align*}
Through the above, it has been confirmed that $P^\alpha$ and $P^\beta$ are orthogonal projections of different inner products.
Similarly, if we define a different inner product, we can define a different passive projection.

In learning prediction errors, the optimal choice of inner product and passive projection is not clear.
For example, as $\mathcal{P}^\beta$ is an identity map for $h$ in terms of passive projection, learning updates for $g$ becomes relatively challenging.
Designing the optimal inner product and gradient method for learning using parametric projection is our next challenge.

Furthermore, as the general dissipativity condition are nonlinear constraints (namely, the QME~(\ref{Eq:QME})), it is challenging to generalize this discussion to dissipative projections.
Dissipative projections are not linear projections naturally defined on a Hilbert space, so they cannot be defined as orthogonal projections using an inner product.
The optimal dissipative projection on non-Hilbert spaces and how to learn it are the issues.
\section{Difference of our Previous Study}\label{App:Difference_of_our_Previous Study}
Dissipativity deals with generalized time series input-output characteristics, and various input-output characteristics can be expressed depending on the setting of the supply rate $w(u,y)$.
Therefore, the input-output stability treated in our previous study \cite{kojima2022learning} is one example, and other input-output characteristics such as internal stability, passivity, and energy conservation can also be handled.
Furthermore, while the Hamilton-Jacobi inequality in our previous study is a sufficient condition, the nonlinear KYP lemma in this study is a necessary and sufficient condition. 
Therefore, the range of dynamics explored by this method is wider. Unlike our previous study, the number of NNs to be trained increases because an indirect vector function $l(x)$ is introduced.

\section{Detail of Experiments}\label{APP:Examples}
This section shows the detail examples results and the setting of the experiments.
Furthermore,  we explain the physical information of target model, based on energy properties.

\subsection{Mass-Spring-Damper(Linear) Dynamics} \label{APP:exaple_linear}

The mass-spring-damper system is one of the most commonly discussed examples of linear systems and provides important insights as a time-series system (See Figure~\ref{fig:mass-spring-damper}).
Letting $m$  be the mass, $k$ the spring constant, $c$ the damping coefficient, and $F$ the external force, the mass-spring-damper system is described by the following dynamics:
\begin{align*}
m \ddot{q} + c\dot{q} + k{q} = F,\quad q(0) = 0,
\end{align*}
where,  $q$ represents the position of the point mass. 
Since the mass-spring-damper system is a dynamical system, its energy relation is well-defined, and the following equation holds:
\begin{align*}
&\underbrace{\frac{1}{2}kq^2(T)}_{\text{Potential Energy}} + \underbrace{\frac{1}{2}m\dot{q}^2(T)}_{\text{Kinetic Energy}}\\&\hspace{10ex}+\underbrace{\int_{0}^T c \dot{q}^2(t)\dd t}_{\text{Dissipation Term}}
= \underbrace{\int_{0}^T F(t)\dot{q}(t)\dd t}_{\text{Work}}.
\end{align*}
The sum of the first and the second term represents the internal energy of this system.
The third term represents the energy dissipation due to the damper, and the final term indicates the energy input from the external force $F$.

\begin{figure}[t]
    \centering
     \includegraphics[width=0.8\linewidth]{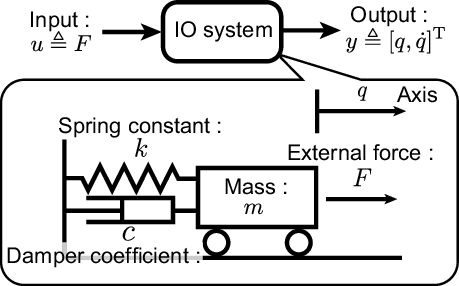}
    \caption{The sketch of the mass-spring-damper model.}
    \label{fig:mass-spring-damper}
\end{figure}

Consider the external force $F$ as the input, the position $q$ and velocity $\dot{q}$ as the outputs, written as
\begin{align*}
x \triangleq \begin{bmatrix}
q\\\dot{q}
\end{bmatrix},\quad 
u \triangleq F,\quad y \triangleq 
\begin{bmatrix}
q\\\dot{q}
\end{bmatrix}.
\end{align*}
The state-space model can be expressed as follows:
\begin{align*}
\dot{x} = \begin{bmatrix}
0&1\\
- \tfrac{k}{m}& - \tfrac{c}{m}
\end{bmatrix}x + 
\begin{bmatrix}
    0\\
    \frac{1}{m}
\end{bmatrix}
u, \quad y = x.
\end{align*}
Then, defining the storage function and supply rate for dissipativity as:
\begin{align*}
V(x) &\triangleq \frac{1}{2}kx_1^2  + \frac{1}{2}m x_2^2,\\
w (u,y) &\triangleq  [y~u]
\begin{bmatrix}
    0&0&0\\
    0&-c&\tfrac{1}{2}\\
    0&\tfrac{1}{2}&0
\end{bmatrix}
\begin{bmatrix}
y\\
u
\end{bmatrix},
\end{align*}
the condition of dissipativity (\ref{Eq:dissipativity}) is satisfied.
Therefore, the mass-spring-damper model satisfies the dissipativity with quadratic supply rate defined as 
\begin{align*}
    Q= \begin{bmatrix}
        0&0\\0&-c
    \end{bmatrix},
    S= \begin{bmatrix}
        0\\\frac{1}{2}
    \end{bmatrix},\quad
    R = 0.
\end{align*}

Furthermore, it is guaranteed that in this model, the origin $x \equiv 0$  is asymptotically stable, and from the perspective of energy conservation, the input-output stability is also ensured.

\begin{table}[t]
    \centering
    \begin{tabular}{|c||r|r|}
    \hline 
     Data& model& Time(hour) \\ \hline 
     \hline 
     \multirow{5}{*}{
     \begin{tabular}{c}
          N=100
     \end{tabular}}
&Naive	&$1.35	\pm0.26$ \\ \cline{2-3}
&Stable	&$1.59	\pm0.21$ \\ \cline{2-3}
&Conservation	&$2.44	\pm0.61$ \\ \cline{2-3}
&IO stable	&$2.23	\pm0.44$ \\ \cline{2-3}
&Dissipative	&$2.29	\pm0.53$ \\ \hline
     \multirow{5}{*}{
     \begin{tabular}{c}
          N=1000
     \end{tabular}}
&Naive	&$7.29	\pm1.42$ \\ \cline{2-3}
&Stable	&$11.55	\pm3.10$ \\ \cline{2-3}
&IO stable	&$15.86	\pm5.24$ \\ \cline{2-3}
&Conservation	&$25.61	\pm4.75$ \\ \cline{2-3}
&Dissipative	&$19.36	\pm4.89$ \\ \hline
     \end{tabular}
    \caption{Computational time of the mass-spring-damper benchmark.}
    \label{tab:result_linear_time}
\end{table}

\subsubsection{Experimental Setting :}

In the experiment in Section~\ref{Sec:linear_result}, we use the mass-spring-damper system with $k=m=c=1$ to set up the experiment, including generating the experimental dataset.
To construct the dataset, we prepare input rectangle waves generated by randomly selecting an amplitude of $\pm 1$ and inputting a rectangle wave of random length, with different wave forms for training and testing.
To evaluate robustness to inputs that were not anticipated during training phase, we also showed two results: one using a step input with an amplitude $1.0$ and the other using a waveform generated by a random walk model with a Gaussian distribution of variance 0.005.

The input and output signals on the period $[0,10]$ are sampled with an interval $\Delta t = 0.1$.

The computational time corresponding to Table~\ref{tab:result_linear}
 required for one trial of 5000-epoch training is listed in Table~\ref{tab:result_linear_time}.
For training each method with neural networks, an NVIDIA Tesla A100 GPU was used.

Related to Table~\ref{tab:result_linear},
the difference in the mean values between the naive and other proposed methods was tested using a Bonferroni-corrected t-test with a significance level of 5\%.
As a result, significant differences were observed in the results for all three inputs of conservation with $N = 100$, and no improvements were observed in the others.

To confirm that the dissipativity property is satisfied, we checked the behavior of the dissipative model trained from the mass-spring-damper dynamics by focusing on the storage function $V(x(t))$ and the time integral of the supply rate $w(u(t),y(t))$.
From Figure~\ref{fig:linear_traj_test}, it can be confirmed that the dissipativity inequality~(\ref{Eq:dissipativity}) is always satisfied.
Note that to strictly satisfy the dissipative, the projection must be performed accurately in actual numerical experiments, and numerical errors and division by zero must be prevented.

\begin{figure}[t]
    \centering
     \includegraphics[width=0.9\linewidth]{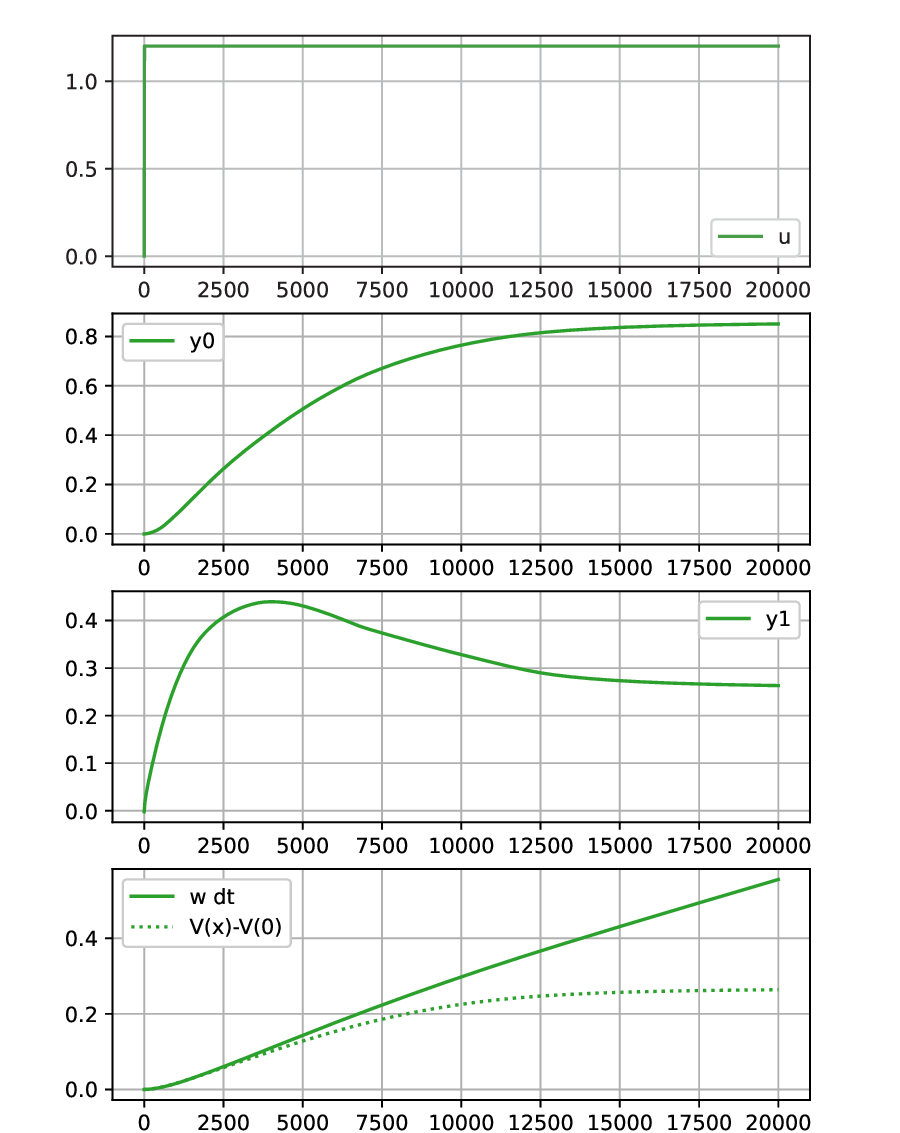}
    \caption{The behaviors of storage function $V(x(t))$ and the time integral of supply rate $w(u(t),y(t))$ in mass-spring-damper dynamics. 
    The top figure is the input $u(t)$, the second and third figures correspond to each dimension of the output $y(t)$, and the bottom figure is the storage function $V(x(t))-V(x(0))$ (dotted line) and the time integral of supply rate $w(u(t),y(t))$ (solid line).
    }
    \label{fig:linear_traj_test}
\end{figure}

\subsection{$n$-link Pendulum}\label{APP:exaple_nlink}
\begin{table*}[t]
    \centering
    \begin{tabular}{|c|c||r|r|r|r|r|}
    \hline 
     Train& Test& Naive  & Stable&  IO stable& Conservation$^*$ & Dissipative \\ \hline 
     \hline 
     \multirow{3}{*}{\begin{tabular}{c}
          Rectangle \\2-link,N=100
     \end{tabular}}&Rectangle &
                    $0.093 \pm 0.058$ & ${\bf 0.079} \pm 0.055$ & $0.147 \pm 0.043$ & $0.163 \pm 0.043$ & $0.105 \pm 0.063$ \\ \cline{2-7}
     &Step        &   $0.069 \pm 0.060$ & ${\bf 0.061} \pm 0.057$ & $0.114 \pm 0.042$ & $0.134 \pm 0.052$ & $0.083 \pm 0.067$ \\\cline{2-7}
     &Random      & ${\bf 0.140} \pm 0.012$ & $0.145 \pm 0.013$ & $0.148 \pm 0.016$ & $0.152 \pm 0.028$ & $0.150 \pm 0.009$ \\\hline

     \multirow{3}{*}{\begin{tabular}{c}Rectangle \\3-link, N=100\end{tabular}}&Rectangle &
                          ${\bf 0.162} \pm 0.077$ & $0.162 \pm 0.078$ & $0.198 \pm 0.068$ & $0.165 \pm 0.100$ & $0.186 \pm 0.075$ \\ \cline{2-7}
     &Step&  $0.267 \pm 0.011$ & $0.264 \pm 0.010$ & $0.307 \pm 0.084$ & $0.251 \pm 0.024 \dagger$ & ${\bf 0.249} \pm 0.042$ \\\cline{2-7}
     &Random&  $0.170 \pm 0.026$ & $0.177 \pm 0.024$ & $0.181 \pm 0.033$ & $0.192 \pm 0.044$ & ${\bf 0.167} \pm 0.054$ \\\hline
     
     \multirow{3}{*}{\begin{tabular}{c}Rectangle\\ 2-link, N=1000\end{tabular}}&Rectangle
                       &  ${\bf 0.038} \pm 0.000$ & $0.039 \pm 0.000$ & $0.084 \pm 0.056$ & ${\bf 0.038} \pm 0.000$ & $0.063 \pm 0.031$ \\ \cline{2-7}
     &Step&  ${\bf 0.021} \pm 0.000$ & $0.020 \pm 0.001$ & $0.063 \pm 0.053$ & ${\bf 0.021} \pm 0.001$ & $0.035 \pm 0.020$ \\\cline{2-7}
     &Random&  $0.156 \pm 0.004$ & ${\bf 0.150} \pm 0.002 \dagger$ & $0.163 \pm 0.023$ & $0.155 \pm 0.004$ & $0.152 \pm 0.008$ \\\hline     

     \multirow{3}{*}{\begin{tabular}{c}Rectangle \\ 3-link, N=1000 \end{tabular}}&Rectangle
                       &  $0.066 \pm 0.002$ & ${\bf 0.064} \pm 0.001$ & $0.114 \pm 0.060$ & $0.065 \pm 0.002$ & $0.064 \pm 0.000 \dagger$ \\ \cline{2-7}
     &Step             &  $0.272 \pm 0.003$ & $0.274 \pm 0.005$ & ${\bf 0.250} \pm 0.029 \dagger$ & $0.273 \pm 0.002$ & $0.272 \pm 0.003$ \\\cline{2-7}
     &Random           &  $0.205 \pm 0.011$ & $0.205 \pm 0.004$ & ${\bf 0.199} \pm 0.013$ & $0.206 \pm 0.006$ & $0.209 \pm 0.002$ \\\hline     
     
     \end{tabular}
    \caption{The prediction error (RMSE) of the  $n$-link pendulum benchmark.}
    \label{tab:result_nlink}
\end{table*}

\begin{figure}[t]
    \centering
     \includegraphics[width=0.8\linewidth]{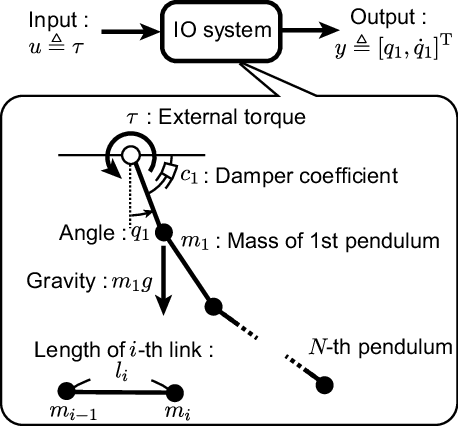}
    \caption{The sketch of $n$-link pendulum.}
    \label{fig:n_link_pendulum}
\end{figure}

\begin{table}[t]
    \centering
    \begin{tabular}{|c||r|r|}
    \hline 
     Data& model& Time(hour) \\ \hline 
     \hline 
     \multirow{5}{*}{
     \begin{tabular}{c}
          2-link,N=100
     \end{tabular}}
&Naive &	$1.59 \pm 	0.16$ \\ \cline{2-3}
&Stable &	$2.25 \pm 	0.25$ \\ \cline{2-3}
&IOstable &	$3.60 \pm 	1.41$ \\ \cline{2-3}
&Conservation &	$2.96 \pm 	0.54$ \\ \cline{2-3}
&Dissipative &	$3.20 \pm 	1.07$ \\ \hline
     \multirow{5}{*}{
     \begin{tabular}{c}
          3-link,N=100
     \end{tabular}}
&Naive	&$12.78	\pm4.74$ \\ \cline{2-3}
&Stable	&$17.33	\pm5.11$ \\ \cline{2-3}
&IOstable	&$29.49	\pm11.03$ \\ \cline{2-3}
&Conservation	&$39.26	\pm9.67$ \\ \cline{2-3}
&Dissipative	&$25.49	\pm12.16$ \\\hline
     \multirow{5}{*}{
     \begin{tabular}{c}
          2-link,N=1000
     \end{tabular}}
&Naive	&$1.55	\pm0.18$ \\ \cline{2-3}
&Stable	&$2.11	\pm0.45$ \\ \cline{2-3}
&IOstable	&$3.07	\pm0.70$ \\ \cline{2-3}
&Conservation	&$4.04	\pm0.79$ \\ \cline{2-3}
&Dissipative	&$2.61	\pm0.36$ \\ \hline
     \multirow{5}{*}{
     \begin{tabular}{c}
          3-link,N=1000
     \end{tabular}}
&Naive	&$7.49	\pm3.61$ \\ \cline{2-3}
&Stable	&$16.51	\pm7.96$ \\ \cline{2-3}
&IOstable	&$24.10	\pm11.07$ \\ \cline{2-3}
&Conservation	&$34.39	\pm16.60$ \\ \cline{2-3}
&Dissipative	&$21.33	\pm11.32$ \\ \hline
     \end{tabular}
    \caption{Computational time of the  $n$-link pendulum benchmark.}
    \label{tab:result_nlink_time}
\end{table}

The $n$-link pendulum is a dynamical model characterized by strong non-linearity and is widely utilized in the evaluation of prediction problems (Figure~\ref{fig:n_link_pendulum}). 
Notably, the behavior of pendulums with $n\geq 2$ exhibits chaotic dynamics, making accurate prediction challenging.
In this study, we employ a model in which dampers, proportional to angular velocity, are applied to each joint of the pendulum, and an external torque is applied to the first joint.
This damped model is commonly used as a representation of a robotic manipulator, making it a model of significant industrial relevance.

Due to the complexity of directly writing out the ordinary differential equation of the $n$-link pendulum, we construct the dynamics using the Euler-Lagrange equation.
Let $q_i$ denote the angle of the $i$-th pendulum and $l_i$ its length of the link.
The coordinates of each pendulum in $\mathbb{R}^2$ are given by:
\begin{align*}
\phi_i &\triangleq l_i \sin(q_i) + \sum_{j=1}^{i-1} l_j \sin(q_j), \\
\psi_i &\triangleq -l_i \cos(q_i) - \sum_{j=1}^{i-1} l_j \cos(q_j), 
\quad i = 1,\ldots, n.
\end{align*}
In this context, the potential energy, kinetic energy, and the energy dissipated by the dampers are defined as follows:
\begin{align*}
P(q) &\triangleq \sum_{i=1}^n m_i g \psi_i, \\
K(q,\dot{q}) &\triangleq \sum_{i=1}^n \frac{1}{2}m_i (\dot{\phi}_i^2 + \dot{\psi}_i^2), \\
D(\dot{q}) &\triangleq \sum_{i=1}^n \frac{1}{2} 
\begin{cases}
c_i\dot{q}_i^2, &i=1,\\
c_i (\dot{q}_i -\dot{q}_{i-1})^2, &i=2,3,\ldots,n.
\end{cases}
\end{align*}
The dissipated energy $D(\dot{q})$ is determined by the relative velocities of each link, where the relative velocity of the first link is given by $\dot{q}_1$ only.
Therefore, the dynamics of the $n$-link pendulum is described using the Euler-Lagrange equation as follows:
\begin{align}
\begin{aligned}
    &\dv{t}\pdv{L}{\dot{q}_i} - \pdv{L}{q_i} + \pdv{D}{\dot{q}_i} = \begin{cases} \tau &i = 1,\\0&i =2,3,\ldots ,n,\end{cases}\\
   &\hspace{25ex}q_i =\dot{q}_i = 0,    
\end{aligned}\label{Eq:n_link_pendulum}
\end{align}
where Lagrangian $L \triangleq K - P$.
Using he potential energy $P(q)$, kinetic energy $K(q,\dot{q})$, and the
energy dissipated $D(\dot{q})$, the energy relation is written as
\begin{align*}
    &\underbrace{P(q(T)) - P(q(0))}_{\text{Potential Energy}} 
    + \underbrace{K(q(T),\dot{q}(T)) - K(q(0),\dot{q}(0))}_{\text{Kinetic Energy}}\\
    &\hspace{20ex}+ \underbrace{\int_{0}^T\inner< \dot{q}|\pdv{D}{\dot{q}}> \dd t}_{\text{Dissipation Term}} 
    =  \underbrace{\int_{0}^T \tau \dot{q}_1 \dd t}_{\text{Work}}.
\end{align*}
Note that the third term can be described as a quadratic form of angular velocities $\dot{q}$, written as
\begin{align*}
\inner< \dot{q}|\pdv{D}{\dot{q}} >&= \dot{q}^\T
\begin{bmatrix}
c_1 \dot{q}_1 + c_2 (\dot{q}_1 - \dot{q}_2)\\
c_2 ( \dot{q}_2 - \dot{q}_1) + c_3  (\dot{q}_2 - \dot{q}_3)\\
\vdots\\
c_{n-1} ( \dot{q}_{n-1} - \dot{q}_{n-2}) + c_n  (\dot{q}_{n-1} - \dot{q}_n)\\
c_n ( \dot{q}_{n} - \dot{q}_{n-1})    
\end{bmatrix}\\
&=  \dot{q}^\T\begin{bmatrix}
    c_1 + c_2& - c_2& 0& \cdots &0\\
    -c_2  &  c_2+ c_3  & - c_3&  &0\\
      0&  -c_3  & c_3 + c_4&   &0\\
      \vdots &&&\ddots&\vdots\\
      0 &0&0&\cdots&c_n
\end{bmatrix}
\begin{bmatrix}
\dot{q}_1\\
\vdots\\
\dot{q}_n
\end{bmatrix}\\
&\triangleq \dot{q}^\T C \dot{q}, 
\end{align*}
where $C$ is a positive definite matrix.
Considering the external torque $\tau$ as the input, the first angle $q_1$ and $\dot{q}_1$ as the velocity, and the all angle and velocity as state, written as  
\begin{align*}
x \triangleq [q_1,\ldots ,q_n,\dot{q}_1,\ldots \dot{q}_n]^\T,\quad 
u \triangleq \tau,\quad 
y \triangleq [q_1,~\dot{q}_1]^\T,
\end{align*}
the dynamics of the state-space is decided as the Euler-Lagrange equation~(\ref{Eq:n_link_pendulum}).

Thereafter, we note that the relationship of dissipativity.
Assuming that the storage function $V$ is written as
\begin{align*}
V(x) \triangleq V(q,\dot{q}) =  P(q) + K(q,\dot{q}),
\end{align*}
the following equation is satisfies:
\begin{align*}
 \dot{V}(x) &=  \tau \dot{q}_1 - \dot{q}^\T C\dot{q}\\
&\leq  \tau \dot{q}_1 - \dot{q}_1^\T c_1\dot{q}_1 \\
&=  \tau [0~1] y  -   y^\T 
\begin{bmatrix}
0&0\\
0&c_1
\end{bmatrix}y\\
&=[y^\T~u^\T]
\begin{bmatrix}
0&0& 0\\
0&-c_1 & \frac{1}{2}\\
0& \frac{1}{2}&0
\end{bmatrix}
\begin{bmatrix}
y\\
u
\end{bmatrix}\\
&\triangleq w(u,y),
\end{align*}
where the above inequality indicates that the total energy dissipation due to the damper is greater than the energy dissipation along the first axis.
Therefore, dissipativity satisfies with the following quadratic supply rate, written as
\begin{align*}
    Q= \begin{bmatrix}
        0&0\\0&-c_1
    \end{bmatrix},
    S= \begin{bmatrix}
        0\\\frac{1}{2}
    \end{bmatrix},\quad
    R = 0.
\end{align*}
Note that the energy conversion projection cannot be made because there are not enough observations to calculate the energy loss of this target.

Furthermore, it is guaranteed that in this model, the origin $x \equiv 0$  is asymptotically stable, and from the perspective of energy conservation, the input-output stability is also ensured.

\subsubsection{Experimental Setting :}

Section~\ref{Sec:nlink_result} uses the n-link pendulum system with $g = 9.81$, $l_i=1/n, m_i=2\frac{n+1}{n}, c_i=1$ for all $i$ to set up the experiment, including generating the experimental dataset.
The input and output signals on the period $[0,1]$ are sampled with an interval $\Delta t = 0.01$.

\subsubsection{Full results :}

Table~\ref{tab:result_nlink} shows the differences in predictive performance of various learning methods when given signal data corresponding to different inputs for the 2-link and 3-link models.
Note that the conservation$*$ results are based on assumptions that are mismatched with the physical model as discussed above.

The computational training time required for one trial of 5000 epochs is listed in Table~\ref{tab:result_nlink_time}.
For training each method with neural networks, an NVIDIA Tesla V100 GPU was used.

Related to Table~\ref{tab:result_nlink},
the difference in the mean values between the naive and other proposed methods was tested using a Bonferroni-corrected t-test with a significance level of 5\%.
As a result, we observed a significant improvement only in cells marked with $\dagger$.

To confirm that the trained dynamics satisfies dissipativity, Figure~\ref{fig:nlink_traj_test} shows the behavior of the dissipative model trained from the 2-link dynamics ($N=1000$) by focusing on the storage function $V(x(t))$ and the time integral of the supply rate $w(u(t),y(t))$.
From the Figure~\ref{fig:nlink_traj_test}, it can be confirmed that the dissipativity inequality~(\ref{Eq:dissipativity}) is always satisfied.
Note that to strictly satisfy the dissipative, the projection must be performed accurately in actual numerical experiments, and numerical errors and division by zero must be prevented.

\begin{figure}[t]
    \centering
     \includegraphics[width=1.0\linewidth]{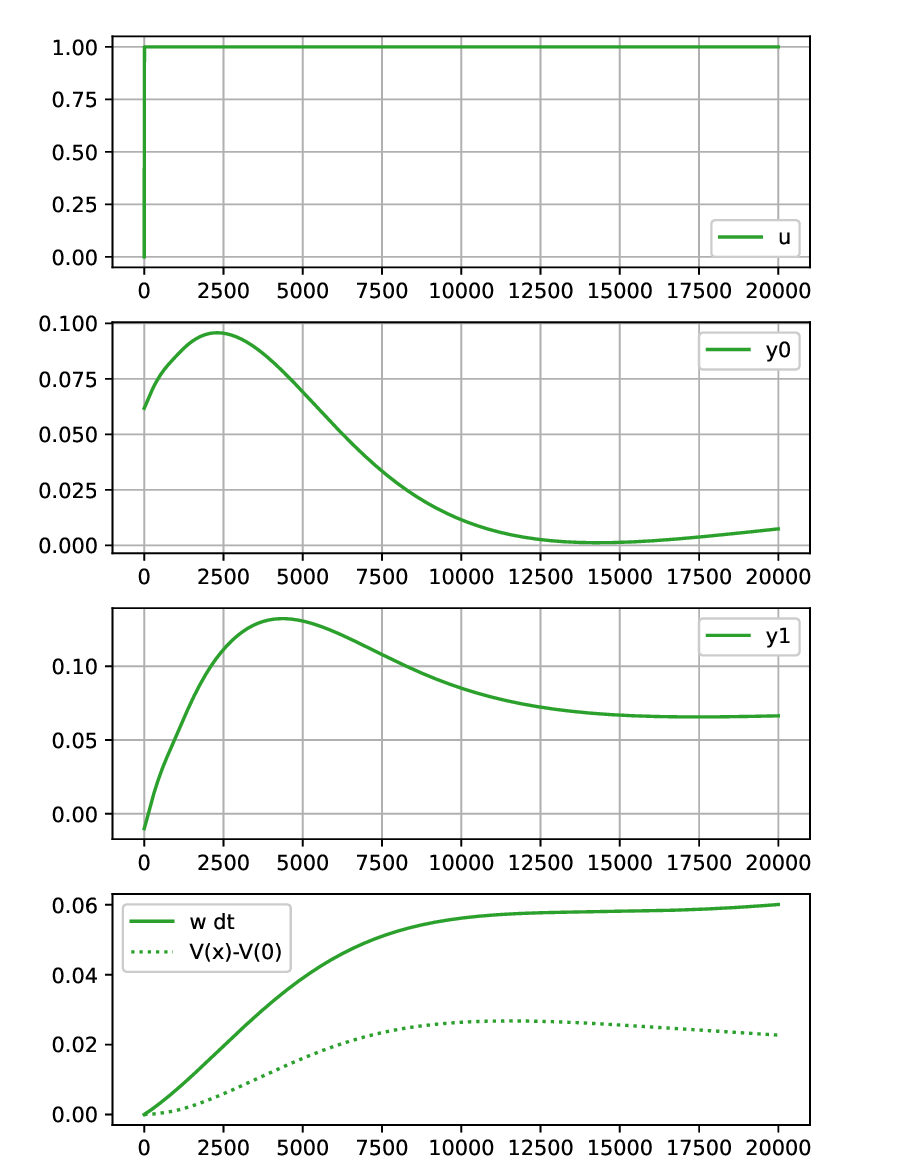}
    \caption{The behaviors of storage function $V(x(t))$ and the time integral of supply rate $w(u(t),y(t))$ in 2-link pendulum. 
    The top figure is the input $u(t)$, the second and third figures correspond to each dimension of the output $y(t)$, and the bottom figure is the storage function $V(x(t))-V(x(0))$ (dotted line) and the time integral of supply rate $w(u(t),y(t))$ (solid line).}
    \label{fig:nlink_traj_test}
\end{figure}

\subsection{Flow around cylinder}\label{APP:flow_around_cylinder}
\begin{table}[t]
    \centering
    \begin{tabular}{|c||r|r|r|r|r|}
    \hline 
     Data & Naive  & Stable&  Dissipative \\ \hline 
     \hline  
     N=50    &  $0.154 \pm 0.181$ & ${\bf 0.129} \pm 0.171$ & $0.176 \pm 0.115$ \\  \hline
     N=100    &  ${\bf 0.058} \pm 0.049$ & $0.217 \pm 0.382$ & $0.117 \pm 0.078$ \\  \hline
     N=200    &  $0.098 \pm 0.158$ & ${\bf 0.017} \pm 0.005$ & $0.090 \pm 0.042$ \\ \hline
     \end{tabular}
    \caption{The prediction error (RMSE) of the fluid system benchmark}
    \label{tab:result_fluid_tbl}
\end{table}

\begin{table}[t]
    \centering
    \begin{tabular}{|c||r|r|}
    \hline 
     Data& model& Time(hour) \\ \hline 
     \hline 
     \multirow{3}{*}{
     \begin{tabular}{c}
          N=50
     \end{tabular}}
&Naive	&$4.13	\pm0.92$ \\ \cline{2-3}
&Stable	&$7.47	\pm0.84$ \\ \cline{2-3}
&Dissipative	&$8.99	\pm2.84$ \\ \hline
     \multirow{3}{*}{
     \begin{tabular}{c}
          N=100
     \end{tabular}}
&Naive	&$7.49	\pm1.45$ \\ \cline{2-3}
&Stable	&$13.29	\pm2.17$ \\ \cline{2-3}
&Dissipative	&$15.03	\pm4.52$ \\ \hline
     \multirow{3}{*}{
     \begin{tabular}{c}
          N=200
     \end{tabular}}
&Naive	&$12.53	\pm2.64$ \\ \cline{2-3}
&Stable	&$18.22	\pm7.80$ \\ \cline{2-3}
&Dissipative	&$29.05	\pm11.72$ \\ \hline
     \end{tabular}
    \caption{Computational time of the fluid system benchmark}
    \label{tab:result_flow_time}
\end{table}

\begin{figure}[t]
    \centering
     \includegraphics[width=1.0\linewidth]{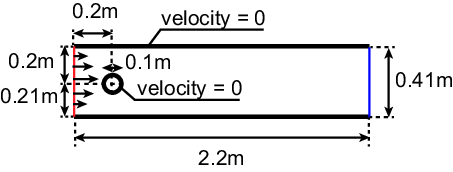}
    \caption{The condition of flow around cylinder.}
    \label{fig:flow_around_cylinder_condition}
\end{figure}

This numerical experiments of fluid simulation were conducted using the following benchmark conditions (\url{https://wwwold.mathematik.tu-dortmund.de/~featflow/en/benchmarks/cfdbenchmarking/flow/dfg_benchmark2_re100.html}).
The fluid simulation was performed using ``dolfinx'' (\url{https://jsdokken.com/dolfinx-tutorial/chapter2/ns_code2.html}).

A fluid with the kinematic viscosity $\nu = 10^{-3}\mathrm{m^2/sec}$ and the fluid density $\rho=1.0\mathrm{kg/m^3}$ was numerically simulated within the flow channel depicted in the Figure~\ref{fig:flow_around_cylinder_condition}.
The inflow velocity on the left boundary was set as a quadratic function along the inlet wall, with its maximum velocity following a triangular wave pattern. The fluid simulation parameters, such as time step intervals and mesh division conditions, were set according to the default parameters specified in dolfinx, i.e., $t \in [0,8], \Delta t = 1/1600$.

\subsubsection{Experimental Setting :}

When learning neural ODEs, the sampling interval $\Delta t$ of the physical simulation is too short, and the learning time is too long, so we thinned it out to 1/20 and learned 640 steps at $t \in [0,1] \Delta t = 1/640$.
When learning neural ODE, the sampling interval $\Delta t$ of the physical simulation is short, training neural networks was performed at $t \in [0,1], \delta t = 1/640$, i.e., 640 time steps in order to shorten the learning time,  

In this benchmark experiment, $V(x)$ must be appropriately designed by the user, but here, we used a function $V(x)=1/2||x||^2$, which is assumed to have one stable point.
In addition, in the dissipative experiment, the following hyperparameters were set:
\begin{align*}
Q = -I_l,\quad S =  0,\quad R =  \gamma^2 I_m
\end{align*}
where $\gamma^2=2$.

Table~\ref{tab:result_fluid_tbl} shows the results of all the experiments corresponding to Figure~\ref{fig:flow_around_cylinder} (B).

In this experiment, 200 triangular wave inputs are created, and $N$ samples are randomly selected to be used as benchmark data.
The computational training time required for one trial of 5000 epochs is listed in Table~\ref{tab:result_flow_time}.
For training each method with neural networks, an NVIDIA Tesla V100 GPU was used.

Related to Table~\ref{tab:result_fluid_tbl},
the difference in the mean values between the naive and other proposed methods was tested using a Bonferroni-corrected t-test with a significance level of 5\%.
As a result, the only improvement that was significant was the stable result for $N=200$.

\begin{table*}[t]
    \centering
    \begin{tabular}{|l|c|c|} \hline
parameter name & range    & type\\ \hline \hline
learning rate & $10^{-5}$  -- $10^{-3}$ & log scale \\ \hline
weight decay  & $10^{-10}$  -- $10^{-6}$ & log scale \\ \hline
batch size    & $ 16   $ -- $128 $ & integer \\ \hline
optimizer     & $\{$ AdamW, Adam, RMSProp$\}$  & categorical\\ \hline
activation     & $\{$ ReLU, LeakyReLU, sigmoid$\}$  & categorical\\ \hline
$\#$layer for $\fn$ & $0$ -- $3$ & integer \\ \hline
$\#$layer for $\gn$ & $0$ -- $3$ & integer\\ \hline
$\#$dim. for a hidden layer of $\fn$ & $8 - 32$ & integer\\ \hline
$\#$dim. for a hidden layer of $\gn$ & $8 - 64$ & integer \\ \hline
Initial scale parameter for $\fn$  & $10^{-5}$ -- $1.0 $ & log scale \\ \hline
$\lambda_2$  & $10^{-10}$ -- $1.0 $ & log scale \\ \hline
    \end{tabular}
    \caption{The search space of Bayesian optimization}
\label{tab:hyper_parameter_Bayesian_optimization}
\end{table*}

\begin{table}[t]
    \centering
    \begin{tabular}{|l|c|} \hline
parameter name & value    \\ \hline \hline
$\lambda_1$  & 0.001 \\ \hline
$\#$layer for $\hn$ & $0$ \\ \hline
$\#$layer for $\eta$ & $0$ \\ \hline
$\#$layer for $l$ & $1$ \\ \hline
$\#$dim. for a hidden layer of $\ln$ & $32$ \\ \hline
$\gamma^2$ & 2.0 \\ \hline
    \end{tabular}
    \caption{Other hyperparameters}
    \label{tab:hyper_parameter_fix}
\end{table}

\section{Neural network architecture and hyper parameters}\label{APP:result_bo}

This section details how to determine the neural network architecture.
In our experiments, 90$\%$ of the dataset is used for training and the remaining 10$\%$ for testing.
To determine hyperparameters, 20$\%$ of the training data is used as validation data.
We ran 100 trials consisting of 10 epochs, selected the hyperparameters that performed best on the validation data and ran 5000 epochs with selected settings.
The architecture and hyperparameters of the neural networks were basically determined by using the tree-structured Parzen estimator (TPE) implemented in Optuna  \cite{optuna_2019}.

Table~\ref{tab:hyper_parameter_Bayesian_optimization} shows the search space of hyperparameters.
The first three parameters: learning rate, weight decay, and batch size are parameters for training the neural networks.
Also, an optimizer is selected from AdamW, Adam, and RMSProp.
The structure of neural network is determined from the number of intermediate layers and dimensions for each layer.
One layer in our setting consists of a fully connected layer with a ReLU activation.
Here, none of the hidden layers corresponds to a linear transformation from input to output.
The last three rows represent parameters related to our proposed methods.
$\lambda_2$ is a hyperparameter of the loss function.
The initial scale parameter is multiplied with the output of $\fn$ to prevent the value of $\fn(x)$ from becoming large in the initial stages of learning.
When $\fn(x)$, which determines the behavior of the internal system, outputs a large value, it diverges due to time evolution, and the learning of the entire system may not progress. 
Therefore, it is empirically preferable to start with a small value for $\fn(x)$ at the initial stage of learning.
The other parameters were fixed as shown in Table~\ref{tab:hyper_parameter_fix}.

\section{Pseudo-code of the learning process}\label{APP4}

Algorithm \ref{alg:a1} shows the pseudo-code of the learning process.
The first line defines the projected dynamics $(\fm,\gm,\hmm)$ from the pre-projected dynamics $(\fs,\gs,\hs)$, defined by the neural network, where $\phi$ is a set of parameters of the pre-projected dynamics.
Note that since the projection is differentiable, the gradients from the projected dynamics can be used to compute the gradients of the pre-projection dynamics by using automatic differentiation.
The 2-7 line represents a training loop, where the gradient-based optimization methods can be used by using the forward and backward calculation.
Note that an ODE solver is used for forward calculation,
and Algorithm \ref{alg:a2} shows the forward calculation when the Euler method is used.
For simplicity, mini-batch computation is omitted in this schematic.

 \begin{algorithm}
 \caption{Training process}
 \label{alg:a1}
 \begin{algorithmic}[1]
 \renewcommand{\algorithmicrequire}{\textbf{Input:}}
 \renewcommand{\algorithmicensure}{\textbf{Output:}}
 \REQUIRE $x_0$: initial state, $u$: input signal,$y$: output signal,   $(\fs,\gs,\hs)$: nominal dynamics, $V$: a designed function
  \STATE define modified functions $(\fm,\gm,\hmm)$ from $(\fs,\gs,\hs) $ and $V$
  \FOR {$1$ to $\#$iterations}
    \STATE $\hat{y} \leftarrow$ ODE with $(\fm,\gm,\hmm)$ from $x_0, u$ ({\bf Algorithm~2})
    \STATE forward computation of ${\rm Loss}$ function (\ref{Eq:Loss}) from $y$
    \STATE $\nabla_\phi {\rm Loss} \leftarrow $ backward computation with ${\rm Loss}$
    \STATE $\phi \leftarrow$ Optimizer($\phi$, $\nabla_\phi {\rm Loss}$)
  \ENDFOR
 \end{algorithmic} 
 \end{algorithm}

\begin{algorithm}
 \caption{Forward computation for dynamics Eq. (1)}
 \label{alg:a2}
 \begin{algorithmic}[1]
 \renewcommand{\algorithmicrequire}{\textbf{Input:}}
 \renewcommand{\algorithmicensure}{\textbf{Output:}}
 \REQUIRE $x_0$: initial state, $u$: input signal, $(\fm,\gm,\hmm)$: dynamics 
 \ENSURE  $\hat{y}$: output signal \\
  \FOR {t $\leftarrow 0$ to $T$}
    \STATE $x_{t+1} \leftarrow x_t + \Delta t (\fm(x_t)+\gm(x_t)u_t)$
    \STATE $\hat{y}_{t} \leftarrow \hmm(x_{t})$
  \ENDFOR
 \RETURN $\hat{y}$ 
 \end{algorithmic} 
\end{algorithm}

\end{document}